\tikzset{
modal/.style={>=stealth’,shorten >=1pt,shorten <=1pt,auto,node distance=1.5cm,
semithick},
world/.style={circle,draw,minimum size=0.5cm,fill=gray!15},
point/.style={circle,draw,inner sep=0.5mm,fill=black},
reflexive above/.style={->,loop,looseness=7,in=120,out=60},
reflexive below/.style={->,loop,looseness=7,in=240,out=300},
reflexive left/.style={->,loop,looseness=7,in=150,out=210},
reflexive right/.style={->,loop,looseness=7,in=30,out=330}
}
\DeclareMathOperator{\bL}{\mathbf{L}}
\DeclareMathOperator{\bO}{\mathbf{O}}
\DeclareMathOperator{\bU}{\mathbf{U}}
\DeclareMathOperator{\bK}{\mathbf{K}}
\DeclareMathOperator{\hK}{\mathbf{\hat{K}}}
\newcolumntype{L}[1]{>{\raggedright\let\newline\\\arraybackslash\hspace{0pt}}m{#1}}
\newcolumntype{C}[1]{>{\centering\let\newline\\\arraybackslash\hspace{0pt}}m{#1}}
\newcolumntype{R}[1]{>{\raggedleft\let\newline\\\arraybackslash\hspace{0pt}}m{#1}}
\def\cA{\mathcal{A}}
\def\cE{\mathcal{E}}
\def\cN{\mathcal{N}}
\def\cR{\mathcal{R}}
\DeclareMathOperator{\sskip}{\mathrm{skip}}
\DeclareMathOperator{\iif}{\mathrm{if}}
\DeclareMathOperator{\tthen}{\mathrm{then}}
\DeclareMathOperator{\eelse}{\mathrm{else}}
\def\move{\mathtt{move}}
\def\flick{\mathtt{flick}}
\def\taker{\mathtt{take\_right}}
\def\takel{\mathtt{take\_left}}
\def\ttaker{\mathtt{try\_take\_right}}
\def\ttakel{\mathtt{try\_take\_left}}
\newcommand{\den}[1]{\llbracket \, #1 \, \rrbracket}
\newcommand{\previous}{\raisebox{-.5pt}{\Large\textbullet}} 
\newcommand{\LE}[1]{\ensuremath{\mathcal{L}_{\text{E}}(#1)}}
\newcommand{\LEP}{\LE{\at}}
\newcommand{\pre}{\mathit{pre}}
\newcommand{\post}{\mathit{post}}
\newcommand{\cell}[1]{\mathrm{cell}(#1)}
\newcommand{\model}[1]{\mathtt{mwv}(#1)}
\newcommand{\erelation}[2]{\cR_{#1,#2}}
\def\WV{\ensuremath{\text{{\rm\small W\!V}}}}
\def\cS{\ensuremath{\mathcal{S}}}
\def\FAEEL{{\rm FAEEL}\xspace}
\def\uobj{\ensuremath{\bm{\mathsf{O}}}}
\def\at{\ensuremath{\bm{\mathsf{P}}}}
\def\atbi{\ensuremath{\bm{\mathsf{P}^{bi}}}}
\def\Models{\ensuremath{\bm{\mathsf{M}}}}
\def\Worlds{\ensuremath{\bm{\mathsf{W}}}}
\def\pink{\ensuremath{\mathit{pink}}}
\def\GammaPink{\ensuremath{\Gamma_\pink}}
\begin{document}
\begin{frontmatter}

\title{Dynamic Epistemic Logic with ASP Updates: Application to Conditional Planning}

\author[inst1]{Pedro Cabalar}
\author[inst2]{Jorge Fandinno}
\author[inst2]{Luis Fari\~{n}as del Cerro}

\address[inst1]
{Department of Computer Science\\
University of Corunna, SPAIN\\
{\tt cabalar@udc.es}
}

\address[inst2]
{Institut de Recherche en Informatique de Toulouse\\ 
Universty of Toulouse, CNRS, FRANCE\\
{\tt \{jorge.fandinno,farinas\}@irit.fr}
}


\begin{abstract}
Dynamic Epistemic Logic (DEL) 
is a family of multimodal logics that has proved to be very successful for epistemic reasoning in planning tasks.
In this logic, the agent's knowledge is captured by modal epistemic operators whereas the system evolution is described in terms of (some subset of) dynamic logic modalities in which actions are usually represented as semantic objects called \emph{event models}.
In this paper, we study a variant of DEL, that we call DEL[ASP], where actions are syntactically described by using an
Answer Set Programming~(ASP) representation instead of event models.
This representation directly inherits high level expressive features like
indirect effects, qualifications, state constraints, defaults, or recursive fluents
that are common in ASP descriptions of action domains.
Besides, we illustrate how this approach can be applied for obtaining conditional plans
in single-agent, partially observable domains where knowledge acquisition may be represented as indirect effects of  actions.
\end{abstract}

\begin{keyword}
Answer Set Programming;
Dynamic Epistemic Logic;
Epistemic Logic Programs;
Epistemic Specifications;
Conditional Planning;
Equilibrium Logic;
Non-Monotonic Reasoning
\end{keyword}
\end{frontmatter}


\section{Introduction}\label{sec:introduction}

Automated planning is the field of Artificial Intelligence concerned with the generation of strategies to achieve a goal in a given dynamic domain.
A planner usually starts from a formal representation of the domain, a particular instance of the problem and the goal to achieve.
The planner output is some strategy, expressed in terms of actions that cause the state transitions to reach the goal.
The most common situation is that such a strategy is just a sequence of actions called a \emph{plan}.
In Classical Planning~\cite{Ghallab2004} some simplifying restrictions are assumed: the system has a \emph{finite} number of states, the world is \emph{fully observable} and the transition relation is \emph{deterministic} and  \emph{static} (i.e. transitions are only caused by the execution of actions).
However, a rational agent may easily face planning problems that require relaxing these assumptions.
For instance, a robot may not possess all the information about the environment, either because its sensors have a limited scope, or because its actions may have non-deterministic effects that require observation for finding out the real outcome.
Removing the assumptions of determinism and fully observable world naturally leads to two important questions~\cite{TuSB07}: (i) \emph{how does a plan look like in this new context}? and (ii) \emph{how to represent the changes in the agent's knowledge along the plan execution}?

Regarding (i), two new categories of plans have been defined in this context: \emph{conformant plans} and \emph{conditional plans}.
%
%
A \emph{conformant plan} is a sequence of actions that guarantees achieving the goal regardless unknown values of the fluents in the initial situation or the precise effect of the non-deterministic actions.
%
%
If we further allow sensing actions (acquiring knowledge from the environment) then the structure of a sequential plan is not reasonable any more: a \emph{conditional plan} may contain ``if-then-else'' constructs that allow the agent to follow different strategies depending on the knowledge she acquired when executing the plan.
Approaches to both conformant and conditional planning have been broadly studied in the literature~\cite{TuSB07,peot1992conditional,GoldenW96,PryorC96,LoboMT97,blum1997fast,smith1998conformant,Golden98,weld1998extending,rintanen1999constructing,bonet2000planning,castellini2003sat,eiter2003logic,cimatti2004conformant,bryce2006planning,hoffmann2006conformant,lowe2010planning,BolanderA11,pontelli2012answer,AndersenBJ12,Bolander14,KominisG15,KominisG17,CooperHMMR16a,CooperHMMR16b,BolanderEMN18}.

With respect to question (ii), several approaches studied the effects of sensing actions~\cite{GoldenW96,LoboMT97,lowe2010planning,BolanderA11,AndersenBJ12,Bolander14,moore1984formal,thielscher2000representing,son2001formalizing,scherl2003knowledge}.
One prominent line of research is based on \emph{Dynamic Epistemic Logic} (DEL)~\cite{van2007dynamic,van2008semantic}, a multi-modal approach where the agent's knowledge is captured by modal epistemic operators whereas the system evolution is described in terms of (some subset of) dynamic logic~\cite{pratt76a} modalities.
For instance, the DEL expression $[\mathtt{watch}] (\bK rain \vee \bK \sneg rain)$ represents that, after any possible outcome of sensing action $\mathtt{watch}$, the agent knows whether $rain$ holds or not.
Different variants of DEL have been successfully applied to the problem of planning with non-deterministic, partially observable multi-agent domains~\cite{lowe2010planning,BolanderA11,AndersenBJ12,Bolander14,van2005dynamic}.
Although DEL has proved to be very convenient for epistemic reasoning in planning, it shows some important drawbacks when analysed from a Knowledge Representation (KR) viewpoint.
This is because, for representing actions and their effects, DEL uses the so-called \emph{event models}~\cite{BatlagMS98,baltag2004logics}, that inherit some of the expressive limitations of the STRIPS planning language~\cite{STRIPS}.
In particular, event models do not allow some important KR features, like the treatment of indirect effects, action qualifications, state constraints or recursive fluents, that are quite common in modern representation of action theories.

One popular KR formalism that naturally covers these expressive features is \emph{Answer Set Programming} (ASP)~\cite{MT99,Nie99,baral2010}, a well-established paradigm for problem solving and non-monotonic reasoning based on the \emph{stable models} semantics~\cite{GL88,GelfondL91}.
%
%
The use of ASP for classical planning was introduced in~\cite{lifschitz99b,LIFSCHITZ200239}, leading to a methodology adopted by different high-level action languages (see~\cite{lee13} and references there) and, more recently, to a temporal extension of ASP~\cite{CabalarKSS18}.
Besides, there exists a growing list of ASP applications~\cite{ergele16a}, many of them dealing with classical planning problems.
%
%
When moving to conformant planning, though, the application of ASP is still under a exploratory stage.
Most of the attempts in this direction relied on a extension called \emph{epistemic specifications}~\cite{Gelfond91} that incorporate modal constructs (called \emph{subjective literals}) for representing the agent's knowledge.
However, the semantic interpretation of this formalism is still under debate and only some preliminary implementations are still available -- see~\cite{LK18} for a recent survey.
On the other hand, the use of ASP to obtain conditional plans was still an unexplored territory.

In this paper, we study the case of single-agent planning and combine both approaches, DEL and (epistemic) ASP, to exploit the advantages of both formalisms in a single language.
Our proposal, called DEL[ASP], relies on replacing event models by epistemic logic programs.
In that way, the basic event to describe the transition between two epistemic models becomes an ASP epistemic specification, while we keep the same dynamic logic operators for temporal reasoning among transitions.
On the one hand, with respect to DEL, the new approach provides all the expressive power of  ASP for action domains: indirect effects, qualifications, state constraints, defaults, or recursive fluents are directly inherited from ASP.
Moreover, when a classical planning scenario (represented in ASP) becomes partially observable, the new approach allows \emph{keeping the scenario representation untouched}, possibly adding new epistemic rules to describe the effects of sensing actions.
On the other hand, with respect to (non-temporal) epistemic ASP, dynamic operators provide a comfortable way for explicitly representing, and formally reasoning about conformant and conditional plans. 

The rest of the paper is organised as follows.
In the next section, we provide some background on the formalisms that conform our proposal. After these preliminaries, we introduce the formalism of DEL[ASP] and explain its behaviour using some examples. Then, we study the representation of conditional plans in this formalism. Finally, we discuss some related work and conclude the paper.

\section{Preliminaries}
 
In this section, we provide some background on planning in DEL, planning in ASP, and the ASP extension of epistemic specifications, since these three components will be present in DEL[ASP] up to some degree.
In the case of DEL, we will present a slight generalisation of~\cite{BolanderA11,AndersenBJ12} that admits abstract \emph{updating objects}. 
These objects correspond to event models for standard DEL, which we denote here as DEL[$\cE]$, and will become epistemic specifications for DEL[ASP].
For the case of epistemic logic programs,
we will use a recent logical formalisation~\cite{Cabalar2019faeel} that avoids the problem of self-supported conclusions present in the original semantics~\cite{Gelfond91}.
This logic, called \emph{Founded Autoepistemic Equilibrium Logic} (FAEEL) is a combination of Pearce's \emph{Equilibrium Logic}~\cite{Pearce96}, a well-known logical characterisation of stable models, with Moore's \emph{Autoepistemic Logic} (AEL)~\cite{Moore85}, one of the most representative approaches among modal non-monotonic logics.


\subsection{Dynamic Epistemic Logic with Abstract Updating Objects}


Given a set of propositional symbols~$\at$ and a set of updating objects~$\uobj$,
a \emph{(dynamic epistemic) formula}
$\varphi$ is defined according to the following grammar:
\begin{gather*}
\varphi ::=  \bot \mid p \mid \varphi \wedge \varphi \mid \varphi \vee \varphi \mid \varphi \to \varphi \mid \sneg \varphi \mid \bK \varphi \mid [o] \varphi
\end{gather*}
where $p \in \at$ is a proposition and $o \in \uobj$ an updating object.
The modal epistemic operator~$\bK$ represents the (planning) agent's knowledge: formula~$\bK \varphi$ means that ``the agent knows $\varphi$.''
The symbol~``$\sneg\,$'' stands here for classical negation (we reserve the symbol~``$\neg$'' for intuitionistic negation latter on).
A formula~$\varphi$ is called \emph{objective} if the operator $\bK$ does not occur in it.
It is called \emph{subjective} if it has at least some proposition and every proposition is in the scope of~$\bK$.
As usual,
we define the following abbreviations:
$\fF \leftrightarrow \fG \eqdef (\fF \to \fG) \wedge (\fG \to \fF)$,
\ $(\fF \leftarrow \fG) \eqdef (\fG \to \fF)$,
and
\ $\top \eqdef \sneg \bot$.
We also define the dual of $\bK$ as follows: $\hK \varphi \eqdef \sneg\bK\sneg \varphi$.
We keep the Boolean operators $\vee,\wedge,\to,\bot$ and avoid defining ones in terms of the others, since this will not be valid when we use an intuitionistic reading later on.
By $\LEP$ we denote the language containing all dynamic epistemic formulas over~$\at$.

We provide next an abstract semantics that just relies on two basic concepts: \emph{epistemic models} that represent the agent's knowledge; and the \emph{updating evaluation}, a pair of generic functions that describe how updating objects cause transitions among those models.

\begin{definition}[Epistemic Model]
Given a (possibly infinite) set of propositional symbols~$\at$ and a (possibly infinite) set of possible worlds~$\Worlds$,
a \emph{model} is a
triple
$\cM = \tuple{W,\cK,V}$
where
\begin{itemize}
\item $W \subseteq \Worlds$ is a finite set of worlds,
\item $\cK \subseteq W \times W$ is an accessibility relation on $W$, and
\item $V : W \longrightarrow 2^{\at}$ is a valuation function.
\end{itemize}
$D(\cM) = W$
denotes the domain of $M$.
An \emph{epistemic model} is a model where $\cK$ is an equivalence relation: it is further called \emph{information cell} if $\cK$ is an universal relation.
A \emph{belief model} is a model where
\mbox{$\cK = W \times W'$} with $W' = W \setminus \set{w_0}$ for some $w_0 \in W$.
By $\Models$ we denote the set of all possible epistemic models over~$\at$ and $\Worlds$.\qed
\end{definition}

We assume that the modeller coincides with the planning agent (the one whose knowledge is captured by the epistemic models).
This is usually called an \emph{internal} point of view, as opposed to the \emph{external} one where the modeller is a different agent, an omniscient
and external observer who can differentiate the actual world and knows its configuration~\cite{Aucher10,Fagin95}.
Adopting the internal orientation translates in the lack of a designated world (all worlds in the model are equally possible).
A second consequence is that, even for single-agent epistemic models, we cannot replace the equivalence relation by a universal one.

Before going into further technical details, let us introduce the following scenario from~\cite{AndersenBJ12}, which will be our running example throughout the~paper.

\begin{example}\label{ex:pink}
After following carefully laid plans, a thief has almost made it to
her target: the vault containing the invaluable Pink Panther diamond.
Standing outside the vault, she now deliberates on how to get her hands on the
diamond.
She \emph{knows} the light inside the vault is off, and that \emph{the Pink
Panther is on either the right or the left pedestal inside.}
Obviously, the
diamond cannot be on both the right and left pedestal, but nonetheless the
agent may be uncertain about its location.
Note that the thief is perfectly capable of moving in the darkness and take whatever is on top any of the pedestals, but she is not able to know whether the diamond has been taken or not.
It is assumed that there are four possible actions: $\move$, $\flick$, $\takel$ and $\taker$.
The action $\move$ changes the location of the thief from outside the vault~($\sneg v$) to inside the vault~($v$) and vice-versa.
The action $\flick$ turns on the light ($l$).
Furthermore, \emph{if the thief is in the vault ($v$) and the light is on ($l$), the thief can see~($s$) where the Pink Panther is.}
Finally, actions $\takel$ and $\taker$ respectively take the diamond ($d$) from the left ($\sneg r$) or right ($r$) pedestal
if the diamond is in the intended pedestal.
\qed
\end{example}

The set of propositions for the example is $\at=\{v,l,r,s,d\}$.
Figure~\ref{fig:pink.state0-4} depicts three consecutive epistemic models, $\cM_0$, $\cM_1$ and $\cM_2$, respectively corresponding to the initial state of Example~\ref{ex:pink} and the resulting states after performing the sequence of actions $\move$ and then $\flick$.
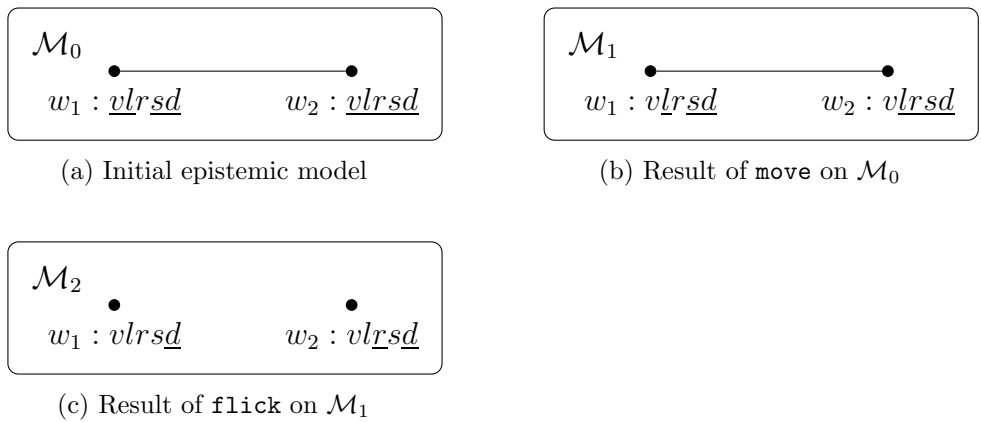
\begin{figure}[htbp]
\begin{center}
\begin{tabular}{ c c c }
\begin{minipage}{0.4\textwidth}
\begin{center}
\begin{tikzpicture}[align=center,node distance=3cm, framed, background rectangle/.style={draw=black, rounded corners}]]
\node[point] (w) [label=below:$w_1 : \underline{vl}r\underline{sd}$, label={[label distance=2mm]175:$\cM_0$} ] {};
\node[point] (v) [label=below:$w_2 : \underline{vlrsd}$, right=of w] {};
\path[-] (w) edge (v);
\end{tikzpicture}
\\
{\footnotesize (a) Initial epistemic model}
\end{center}
\end{minipage}
&\hspace*{0.8cm}&
\begin{minipage}{0.4\textwidth}
\begin{center}
\begin{tikzpicture}[align=center,node distance=3cm, framed, background rectangle/.style={draw=black, rounded corners}]]
\node[point] (w) [label=below:$w_1 : v\underline{l}r\underline{sd}$, label={[label distance=2mm]175:$\cM_1$} ] {};
\node[point] (v) [label=below:$w_2 : v\underline{lrsd}$, right=of w] {};
\path[-] (w) edge (v);
\end{tikzpicture}
\\
{\footnotesize (b) Result of $\move$ on $\cM_0$}
\end{center}
\end{minipage}
\\
\noalign{\vskip 20pt}
\begin{minipage}{0.4\textwidth}
\begin{center}
\begin{tikzpicture}[align=center,node distance=3cm, framed, background rectangle/.style={draw=black, rounded corners}]]
\node[point] (w) [label=below:$w_1 : vlrs\underline{d}$, label={[label distance=2mm]175:$\cM_2$}] {};
\node[point] (v) [label=below:$w_2 : vl\underline{r}s\underline{d}$,right=of w] {};
\end{tikzpicture}
\\
{\footnotesize (c) Result of $\flick$ on $\cM_1$}
\end{center}
\end{minipage}
\end{tabular}
\end{center}
	\caption{Sequence of epistemic models $\cM_0, \cM_1, \cM_2$ that result from actions $\move$ and then $\flick$ starting in the initial state $\cM_0$ of Example~\ref{ex:pink}.
	 }\label{fig:pink.state0-4}
\end{figure}

For improving readability, we represent the world valuations as strings of propositions and underline those that are false.
For instance, the valuation for $w_1$ in Figure~\ref{fig:pink.state0-4}a is depicted as $\underline{vl}r\underline{sd}$ and corresponds to the set of true atoms $\{r\}$, making false all the rest.
The initial model, $\cM_0$ represents the triple $\tuple{W_0,\cK_0,V_0}$ where we have two worlds $W_0=\{w_1,w_2\}$ universally connected, that is, $\cK_0 = W_0 \times W_0 = \set{ (w_1,w_2), (w_2, w_1), (w_1,w_1), (w_2,w_2)}$ with valuations $V_0(w_1) = \set{r}$ and $V_0(w_2) = \emptyset$. 
This means that the agent knows that the thief is outside the vault ($\underline{v}$), the light is off ($\underline{l}$), she cannot see where the Pink Panther is ($\underline{s}$) and she does not have the diamond~($\underline{d}$).
The two connected worlds reveal that she does not know whether the Pink Panther is on the right pedestal (world $w_1$) or on the left one (world $w_2$).
The epistemic model $\cM_1$ in Figure~\ref{fig:pink.state0-4}b reflects the same configuration, with the only exception that now the thief is in the vault ($v$), as a result of moving inside.
That is, \mbox{$\cM_1 = \tuple{W_0,\cK_0,V_1}$}
with $V_1$ satisfying $V_1(w_1) = \set{v,r}$ and $V_0(w_2) = \set{v}$.
Finally, $\cM_2$ shows the epistemic model that results from performing action $\flick$ on $\cM_1$.
In this third model, two relevant changes occur: first, $l$ and $s$ became true in both worlds, since flicking turns on the light, and then, the thief can see the interior of the vault. 
Second, more importantly, the two worlds became completely disconnected, something that reveals that the agent has now complete knowledge of the world configuration in the two possible cases, $w_1$ and $w_2$.
Formally, we have that
$\cM_2 = \tuple{W_0,\cK_2,V_2}$
with
$\cK_2 = \{(w_1,w_1), (w_2,w_2)\}$
and with $V_2$ satisfying $V_2(w_1) = \set{v,l,r,s}$ and $V_2(w_2) = \set{v,l,s}$.

As we can see, the accessibility relation needs not be universal: for instance, we had $(w_1,w_2) \not\in \cK_2$ in $\cM_2$ above.
In general, when $(w,w') \not\in \cK$ we say that the two worlds are \emph{indistinguishable at plan-time}, given epistemic model $\cM=\tuple{W,\cK,V}$ with $\{w_1,w_2\}\subseteq W$.
In the example, model $\cM_2$ tells us that, before executing the plan, the agent cannot tell which of the two possibilities represented by worlds $w_1$ (diamond on the right) and $w_2$ (diamond on the left) will correspond to the actual world.
However, once she executes the plan, she will acquire that knowledge in both cases: $w_1$ and $w_2$ are disconnected, so uncertainty in the agent's knowledge is completely removed.
As a result, at that point, she will be able to make a decision whether she should perform the $\taker$ or $\takel$ action.

If, on the contrary, two worlds $w, w'$ are connected $(w,w') \in \cK$ in some epistemic model, we say that they are \emph{indistinguishable at run-time}.
This expresses a higher degree of uncertainty, since the agent has no way to tell which world corresponds to ``reality'' either before or during the plan execution.
For instance, at model $\cM_1$ we have $(w_1,w_2) \in \cK_1$ meaning that, during the plan execution, the agent will not be able to decide (at that point) whether actions $\taker$ or $\takel$ will work.

Until now, we have only described the information captured by epistemic models and presented an example with transitions among them, but did not specify how those transitions were obtained.
For that purpose, we will only assume, by now, the existence of a pair of generic functions called \emph{updating evaluation} defined below.

\begin{definition}[Updating evaluation]
Given a set of models $\Models$ over set of worlds $\Worlds$ and a set of updating objects $\uobj$,
an \emph{updating evaluation}
is a pair $\tuple{\otimes,\cR}$
of partial functions
\mbox{$\otimes : \Models \times \uobj \longrightarrow \Models$}
and
\mbox{$\cR : \Models \times \uobj \longrightarrow 2^{\Worlds \times \Worlds}$}
satisfying
$\cR(\cM,o) \subseteq D(W) \times D(\otimes(\cM, o))$.\qed
\end{definition}

Function $\otimes$ takes an initial model $\cM$ and some updating object $o$ and provides a successor model $\cM'=\otimes(\cM,o)$.
We will usually write $\otimes$ in infix notation and assume that it is left associative 
so that $\cM \otimes o$ stands for $\otimes(\cM,o)$
and
$\cM \otimes o_1 \otimes o_2$ stands for $(\cM \otimes o_1) \otimes o_2$.
Relation $\cR(\cM,o)$ matches worlds from the initial model $\cM$ and its successor model $\cM'=\cM\otimes o$.
We will also write $\cR_{\cM,o}$ instead of $\cR(\cM,o)$.

At this point, we have all the elements for defining the satisfaction of dynamic epistemic formulas.

\begin{definition}[Satisfaction]
\label{def:del.sat}
Let
\mbox{$\tuple{\otimes,\cR}$}
be an updating evaluation.
Then, given an epistemic model~\mbox{$\cM = \tuple{W,\cK,V}$},
satisfaction of formulas is given by the following recursive definition:
\begin{itemize}
\item $\cM, w \not\models \bot$,
\item $\cM, w \models p$ iff $p \in V(w)$,
\item $\cM, w \models \varphi_1 \wedge \varphi_2$ iff $\cM, w \models \varphi_1$ and $\cM, w \models \varphi_2$,
\item $\cM, w \models \varphi_1 \vee \varphi_2$ iff $\cM, w \models \varphi_1$ or $\cM, w \models \varphi_2$,

\item $\cM, w \models \varphi_1 \to \varphi_2$ iff $\cM, w \not\models \varphi_1$ or $\cM, w \models \varphi_2$,

\item $\cM, w \models \sneg \varphi$ iff $\cM, w \not\models \varphi$,

\item $\cM, w \models \bK \varphi$ iff $\cM, w' \models \varphi$ for all $w'$ with $(w,w') \in \cK$, and

\item $\cM, w \models [o] \varphi$ iff
$\cM \otimes o$ and $\erelation{\cM}{o}$ are defined and
$\cM \otimes o, w' \models \varphi$\\\hspace*{70pt}holds for all $w'$ with $(w,w') \in \erelation{\cM}{o}$.
\qed
\end{itemize}

\end{definition}

As usual, we write
\mbox{$\cM \models \varphi$} iff
\mbox{$\cM,w\models\varphi$} for every world
\mbox{$w \in W$}.
Furthermore, given a theory
$\Gamma$, we write
\mbox{$\cM \models \Gamma$} iff
\mbox{$\cM \models \varphi$} for every formula
\mbox{$\varphi \in \Gamma$}.
We say that theory $\Gamma$ \emph{entails} formula $\psi$, also written 
\mbox{$\Gamma \models \psi$}, iff \mbox{$\cM \models \Gamma$} implies 
\mbox{$\cM \models \psi$} for any epistemic model
\mbox{$\cM \in \Models$}.

It is easy to see that the semantics for the dynamic-free fragment of the language (i.e., without $[\cdot]$ operator) corresponds to modal logic S5 (see~\cite{halpern1986reasoning} for instance).

\subsection{Dynamic Epistemic Logic with Event Model Updates: {\rm DEL[\ensuremath{\cE}]}}

Let us now see how these definitions apply to the case in which updating objects are event models~\cite{BatlagMS98}.
The following is an adaptation of the definition from~\cite{AndersenBJ12}.
A first peculiarity of event models is that, when making an update $\cM \otimes o=\cM'$, the resulting epistemic model $\cM'$ uses world names of the form $(w,e)$ where $w$ is a world from the updated epistemic model $\cM$ and $e$ is a world (or event) from the event model~$o$.
For this reason, along this section, we assume that the global set of available world names $\Worlds$ is closed under formation of pairs.
In other words, $\Worlds$ satisfies $(w,w') \in \Worlds$ for all $w,w' \in \Worlds$.
For instance, given a unique ``atomic'' world name $w_0 \in \Worlds$, the set $\Worlds$ would contain infinitely many pairs $(w_0,w_0), ((w_0,w_0),w_0), (w_0,(w_0,w_0)), \dots$ and so on.

\begin{definition}[Event Model]
An \emph{event model} over $\at$ and $\Worlds$ is a quadruple $\cE = \tuple{E,\cK,\pre,\post}$
where
\begin{itemize}
\item $E \subseteq \Worlds$ is a finite set of worlds called \emph{events},
\item $\pre : E \longrightarrow \LEP$ assigns to each event a precondition, and
\item $\post : E \longrightarrow (\at \longrightarrow \LEP)$ assigns to each event a postcondition, for some propositions in $\at$.
\item $\cK \subseteq E \times E$
\end{itemize}
By $D(\cE) = E$ we denote the domain of $\cE$.
A pair $\tuple{\cE,e}$ with $e \in E$ is called a \emph{pointed event model}.\qed
\end{definition}

\begin{definition}[Event updating evaluation]
Let
\mbox{$\cM = \tuple{W,\cK,V}$} be an epistemic model and
\mbox{$\cE = \tuple{E,\hat\cK,\pre,\post}$}
 an event model, both over $\at$ and~$\Worlds$.
The \emph{product update} $\cM \otimes \cE \eqdef \tuple{W',\cK',V'}$ is another epistemic model where
\begin{itemize}
\item $W' = \setm{ (w,e) \in W \times E}{ \cM,w \models \pre(e)} \subseteq \Worlds$ is a set of worlds,
\item $\cK' = \setm{((w_1,e_1),(w_2,e_2)) \in W' \times W' }{ (w_1,w_2) \in \cK \text{ and } (e_1,e_2) \in \hat\cK }$,
\item $V'((w,e)) = \setm{p \in \at}{ \cM,w \models\post(e)(p) }$ for every $(w,e) \in W'$,
\end{itemize}

Given a pointed event model $\tuple{\cE,e}$,
the \emph{event updating evaluation} is a pair $\tuple{\otimes,\cR}$
with
\begin{itemize}
\item $\cM \otimes \tuple{\cE,e} \eqdef \cM \otimes \cE$
\item $\cR(\cM,\tuple{\cE,e}) \eqdef \setm{(w,w') \in W \times W' }{ w' = (w,e) }$.\qed
\end{itemize}
\end{definition}

For simplicity,
we will usually write $[\cE,e] \varphi$ instead of $[\tuple{\cE,e}] \varphi$.
We will also use the following shorthands
\begin{gather*}
[\cE] \varphi \ \eqdef \ \bigwedge_{e \in D(\cE)} [\cE,e] \varphi
\hspace{2cm}
\tuple{\cE} \varphi \ \eqdef \ \sneg\,[\cE]\sneg\varphi
\end{gather*}
The following result shows that, indeed, the semantics described above coincides with the semantics from~\cite{AndersenBJ12} for the case of event models.

\begin{proposition}
Let $\cM$ be an epistemic model, $w \in D(\cM)$ be a world in $\cM$,
$\tuple{\cE,e}$ be pointed event model and $\varphi \in \LEP$ be a formula. Then,
\begin{itemize}
\item $\cM,w \models [\cE,e] \varphi$ iff $\cM,w \models \pre(e)$ implies $\cM \otimes \cE, (w,e) \models \varphi$,


\end{itemize}
\end{proposition}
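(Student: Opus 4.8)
The plan is to establish the biconditional by unfolding the general satisfaction clause for the dynamic modality $[o]\varphi$ of Definition~\ref{def:del.sat} under the event updating evaluation, and then splitting on whether the event precondition $\pre(e)$ holds at the world $w$.

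First I would instantiate that clause with $o = \tuple{\cE, e}$, using the identities $\cM \otimes \tuple{\cE, e} = \cM \otimes \cE$ and $\cR(\cM, \tuple{\cE, e}) = \setm{(w, w') \in W \times W'}{w' = (w,e)}$. Since, for event models, both $\otimes$ and $\cR$ are total operations, the side condition ``$\cM \otimes o$ and $\erelation{\cM}{o}$ are defined'' is automatically met and can be discharged once and for all. The key structural observation is then that the only pair whose first component is $w$ is $(w, (w,e))$, and this pair lies in $\cR(\cM, \tuple{\cE, e})$ precisely when $(w,e) \in W'$; by the definition of the domain $W'$ of the product update, this is equivalent to $\cM, w \models \pre(e)$. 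Thus the universal quantifier ``for all $w'$ with $(w,w') \in \erelation{\cM}{o}$'' collapses to a single condition, guarded by the precondition.

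The two cases then follow mechanically. If $\cM, w \models \pre(e)$, the relation from $w$ consists of the single pair $(w,(w,e))$, so the left-hand side reduces to $\cM \otimes \cE, (w,e) \models \varphi$; since the antecedent of the right-hand implication holds, that implication reduces to the very same formula, and the two sides coincide. If $\cM, w \not\models \pre(e)$, the relation contains no pair starting at $w$, so the universal quantification on the left is vacuously satisfied and the left-hand side is true; simultaneously, the antecedent of the right-hand implication fails, making that implication vacuously true as well. In either case the two sides agree, which yields the biconditional.

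I do not expect a genuine obstacle, as the whole argument is a definitional unfolding. The only points deserving explicit care are the vacuous-truth step in the case where $\pre(e)$ fails, and the remark that the totality of $\otimes$ and $\cR$ on event models lets us bypass the definedness proviso built into the general satisfaction clause.
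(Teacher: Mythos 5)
Your proposal is correct and follows essentially the same route as the paper's proof: both unfold the satisfaction clause for $[o]\varphi$ under the event updating evaluation and rest on the same key observation that the relation $\erelation{\cM}{\tuple{\cE,e}}$ restricted to pairs starting at $w$ is either the singleton $\set{(w,(w,e))}$ or empty, according to whether $\cM,w \models \pre(e)$. Your presentation as an explicit case split (rather than the paper's chain of equivalences ending in the rewriting of a disjunction as an implication) and your explicit discharge of the definedness proviso are only stylistic refinements of the same argument.
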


\begin{proof}
By definition, we have $\cM,w \models [\cE,e]$
\\iff
$\cM \otimes \tuple{\cE,e}, w' \models \varphi$ for all $w'$ with $(w,w') \in \erelation{\cM}{\tuple{\cE,e}}$
\\iff
$\cM \otimes \cE, w' \models \varphi$ for all $w'$ with $(w,w') \in \erelation{\cM}{\tuple{\cE,e}}$.
\\
Note now that, by definition, we have either
$\erelation{\cM}{\tuple{\cE,e}} \cap (\set{w} \times D(\cM \otimes \cE)) = \set{(w,(w,e))}$
or $\erelation{\cM}{\tuple{\cE,e}} \cap (\set{w} \times D(\cM \otimes \cE)) = \emptyset$.
Hence, the above holds
\\iff
$\cM \otimes \cE, (w,e) \models \varphi$ or $\erelation{\cM}{\tuple{\cE,e}} \cap (\set{w} \times D(\cM \otimes \cE)) = \emptyset$,
\\iff
$\cM \otimes \cE, (w,e) \models \varphi$ or $\cM,w \not\models \pre(e)$
\\iff
$\cM,w \models \pre(e)$ implies $\cM \otimes \cE, (w,e) \models \varphi$.
\end{proof}

The following result helps undertanding the semantics of (non-pointed) event models.

\begin{proposition}
Let $\cM$ be an epistemic model, $w \in D(\cM)$ be a world in $\cM$,
$\cE$ be an event model and $\varphi \in \LEP$ be a formula. Then,
\begin{itemize}

\item $\cM,w \models [\cE] \varphi$ iff $\cM \otimes \cE, (w,e) \models \varphi$ for every $e \in D(\cE)$ such that $\cM,w \models \pre(e)$,

\item $\cM,w \models \tuple{\cE} \varphi$ iff $\cM,w \models \pre(e)$ and $\cM \otimes \cE, (w,e) \models \varphi$ for some $e \in D(\cE)$.
\end{itemize}
\end{proposition}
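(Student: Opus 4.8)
The plan is to derive both equivalences directly from the preceding proposition together with the definitional abbreviations for $[\cE]$ and $\tuple{\cE}$, using nothing beyond the clauses for $\wedge$ and classical negation $\sneg$ in Definition~\ref{def:del.sat}.

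For the first item, I would start from the abbreviation $[\cE]\varphi \eqdef \bigwedge_{e \in D(\cE)}[\cE,e]\varphi$. Since $D(\cE)=E$ is a \emph{finite} set of events, this conjunction has finitely many conjuncts, so the satisfaction clause for $\wedge$ gives that $\cM,w \models [\cE]\varphi$ holds iff $\cM,w \models [\cE,e]\varphi$ for every $e \in D(\cE)$. Applying the preceding proposition to each conjunct rewrites $\cM,w\models[\cE,e]\varphi$ as ``$\cM,w\models\pre(e)$ implies $\cM\otimes\cE,(w,e)\models\varphi$.'' Restricting the universal quantifier to those $e$ whose precondition is satisfied then yields exactly the claimed statement: $\cM\otimes\cE,(w,e)\models\varphi$ for every $e\in D(\cE)$ with $\cM,w\models\pre(e)$.

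For the second item, I would unfold $\tuple{\cE}\varphi \eqdef \sneg[\cE]\sneg\varphi$ and use the clause $\cM,w\models\sneg\psi$ iff $\cM,w\not\models\psi$, so that $\cM,w\models\tuple{\cE}\varphi$ iff $\cM,w\not\models[\cE]\sneg\varphi$. Now I would apply the first item, already proved, to the formula $\sneg\varphi$: the negation of ``$\cM\otimes\cE,(w,e)\models\sneg\varphi$ for every $e$ with $\cM,w\models\pre(e)$'' is ``there exists $e\in D(\cE)$ with $\cM,w\models\pre(e)$ and $\cM\otimes\cE,(w,e)\not\models\sneg\varphi$.'' One last application of the clause for $\sneg$ turns $\cM\otimes\cE,(w,e)\not\models\sneg\varphi$ into $\cM\otimes\cE,(w,e)\models\varphi$, giving the stated existential form.

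Both arguments are essentially bookkeeping, so I do not expect a genuine obstacle. The only point requiring a little care is that $\sneg$ is here \emph{classical} negation, so that the duality between the box form of the first item and the diamond form of the second goes through by ordinary De Morgan reasoning; were $\sneg$ intuitionistic this step would fail. It is also worth flagging explicitly that finiteness of $D(\cE)$ is what lets one read the defined conjunction $\bigwedge_{e\in D(\cE)}$ through the binary $\wedge$ clause of the satisfaction definition.
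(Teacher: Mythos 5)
Your proposal is correct and follows essentially the same route as the paper: the first item by unfolding the abbreviation $[\cE]\varphi \eqdef \bigwedge_{e \in D(\cE)}[\cE,e]\varphi$ and applying the preceding proposition on pointed event models, and the second by unfolding $\tuple{\cE}\varphi \eqdef \sneg\,[\cE]\sneg\varphi$ and pushing the classical negation through the universal statement to obtain the existential form. Your explicit remarks on finiteness of $D(\cE)$ and on the need for $\sneg$ to be classical are just more careful bookkeeping of the same argument.
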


\begin{proof}
The second statement follows directly from its definitions.
For the third, we have
$\cM,w \models \tuple{\cE} \varphi$
iff
$\cM,w \models \sneg\,[\cE] \sneg \varphi$
iff
$\cM,w \not\models [\cE] \sneg \varphi$
\\iff
$\cM,w \models \pre(e)$ does not imply $\cM \otimes \cE, (w,e) \models \sneg\varphi$ for some $e \in D(\cE)$
\\iff
$\cM,w \models \pre(e)$ and $\cM \otimes \cE, (w,e) \not\models \sneg\varphi$ for some $e \in D(\cE)$
\\iff
$\cM,w \models \pre(e)$ and $\cM \otimes \cE, (w,e) \models \varphi$ for some $e \in D(\cE)$.
\end{proof}

\begin{figure}
\small
\begin{center}
\begin{minipage}{0.45\textwidth}
\begin{center}
\begin{tikzpicture}[align=center,node distance=2.5cm, framed, background rectangle/.style={draw=black, rounded corners}]]
\node[point] (w) [label=below:$e_1 : \tuple{v \wedge \sneg d, \set{ d \mapsto \sneg r }}$ ] {};
\end{tikzpicture}
\\
(a) $\takel$
\end{center}
\end{minipage}
\hspace{0cm}
\begin{minipage}{0.45\textwidth}
\begin{center}
\begin{tikzpicture}[align=center,node distance=2.5cm, framed, background rectangle/.style={draw=black, rounded corners}]]
\node[point] (w) [label=below:$e_1 : \tuple{v \wedge \sneg d, \set{ d \mapsto r }}$ ] {};
\end{tikzpicture}
\\
(b) $\takel$
\end{center}
\end{minipage}
\\[20pt]
\begin{minipage}{\textwidth}
\begin{center}
\begin{tikzpicture}[align=center,node distance=5cm, framed, background rectangle/.style={draw=black, rounded corners}]]
\node[point] (w) [label=above:$e_1 : \tuple{v \wedge r, \set{ l \mapsto \top, s \mapsto \top }}$ ] {};
\node[point] (v) [label=below:$e_2 : \tuple{v \wedge \sneg r, \set{ l \mapsto \top, s \mapsto \top }}$ ,right=of w ] {};
\end{tikzpicture}
\\
(c) $\flick$
\end{center}
\end{minipage}
\\[20pt]
\begin{minipage}{\textwidth}
\begin{center}
\begin{tikzpicture}[align=center,node distance=4cm, framed, background rectangle/.style={draw=black, rounded corners}]]
\node[point] (w) [label=above:$e_1 : \tuple{v \vee \sneg l, \set{ v \mapsto \sneg v }}$ ] {};
\node[point] (v) [label=below:$e_2 : \tuple{\sneg v \wedge l \wedge r, \set{ v \mapsto \sneg v, s \mapsto \top }}$ ,right=of w ] {};
\node[point] (y) [label=above:$e_2 : \tuple{\sneg v \wedge l \wedge \sneg r, \set{ v \mapsto \sneg v, s \mapsto \top }}$ ,right=of v ] {};
\end{tikzpicture}
\\
(d) $\move$
\end{center}
\end{minipage}
\end{center}
	\caption{Event models corresponding to the actions of Example~\ref{ex:pink}.
	The first element of the pair assigned to each world corresponds to its preconditions while the second one corresponds to its postconditions.
	For instance, in (a) the precondition is $v \wedge \sneg d$ and the postcondition $\set{d \mapsto \sneg r}$. This postcondition means that, in the next state, $d$ takes the value that formula $\sneg r$ had in the previous state.
	 }
	 \label{fig:pink.event.models}
\end{figure}
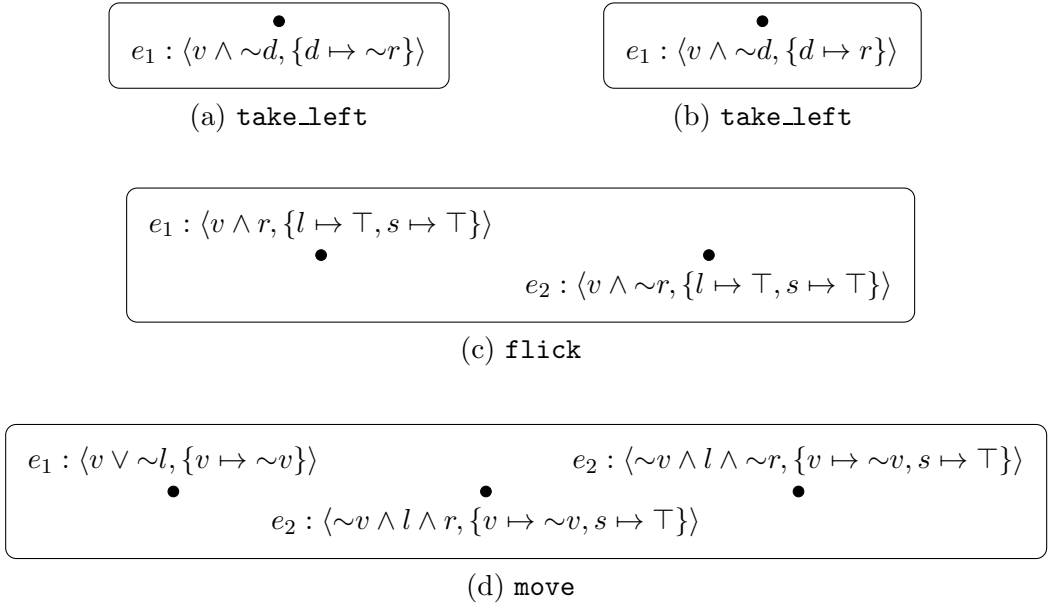
Going back to our running example,
Figure~\ref{fig:pink.event.models}
depicts the event models corresponding to the actions of Example~\ref{ex:pink}.
For instance, Figure~\ref{fig:pink.event.models}a depicts the event model of the action $\takel$ with a single event $e_1$ whose precondition is $v \wedge \sneg d$ and whose postcondition $\set{d \mapsto \sneg r}$ states that, in the next state, $d$ is assigned  the truth value that formula $\sneg r$ had in the previous state.
More interestingly, Figure~\ref{fig:pink.event.models}c depicts the event model of the action $\flick$ which has two events $e_1$ and $e_2$ with the same postcondition but different preconditions.
The precondition of $e_1$ makes it applicable when the thief is in the vault and the diamond is on the right pedestal while the precondition of $e_2$ is analogous but for the left pedestal.\footnote{Note how we must specify the diamond's location in both preconditions, although the only real physical requirement for $\flick$ is being inside the vault. This need for specifying unrelated preconditions may obviously become a representational problem.}
In this sense, when the action $\flick$ is executed in the epistemic model $\cM_1$ (Figure~\ref{fig:pink.state0-4}b), 
it follows that only $w_1$ satisfies the preconditions of $e_1$ and only $w_2$ satisfies the preconditions of $e_2$.
As a result, we can see that $\cM_1 \otimes \flick$ has two worlds, that is,
$D(\cM_1 \otimes \flick) = \set{(w_1,e_1),(w_2,e_2)}$.
Furthermore, since events $e_1$ and $e_2$ are disconnected, we also get that
worlds $(w_1,e_1)$ and $(w_2,e_2)$ are disconnected.
In fact, the epistemic model $\cM_1 \otimes \flick$ is isomorphic to the model $\cM_2$ depicted in Figure~\ref{fig:pink.state0-4}c and can be obtained just by renaming each world $w_i$ in $\cM_2$ as $(w_i,e_i)$.

Note that the existence of two disconnected events in the action $\flick$ encodes the observation that happens when the light is turned on, that is, the agent obtains the knowledge about the actual place of the diamond.
\begin{figure}
\small
\begin{minipage}{0.5\textwidth}
\begin{center}
\begin{tikzpicture}[align=center,node distance=2cm, framed, background rectangle/.style={draw=black, rounded corners}]]
\node[point] (w) [label=above:$e_1 : \tuple{v \wedge r, \set{ l \mapsto \top, s \mapsto \top }}$ ] {};
\node[point] (v) [label=below:$e_2 : \tuple{v \wedge \sneg r, \set{ l \mapsto \top, s \mapsto \top }}$ ,right=of w ] {};
\path[-] (w) edge (v);
\end{tikzpicture}
\\
(a) $\flick'$
\end{center}
\end{minipage}
\hspace{1cm}
\begin{minipage}{0.4\textwidth}
\begin{center}
\begin{tikzpicture}[align=center,node distance=3cm, framed, background rectangle/.style={draw=black, rounded corners}]]
\node[point] (w) [label=below:$w_1 : vlrs\underline{d}$, label={[label distance=2mm]175:$\cM_2'$}] {};
\node[point] (v) [label=below:$w_2 : vl\underline{r}s\underline{d}$,right=of w] {};
\path[-] (w) edge (v);
\end{tikzpicture}
\\
(b)
\end{center}
\end{minipage}
	\caption{(a) Event model corresponding to a variation of the action $\flick$ of Example~\ref{ex:pink} without observing the position of the diamond. (b) Epistemic model obtained after executing the action $\flick'$ in the model $\cM_1$.
	 }
	 \label{fig:pink.event.models'}
\end{figure}
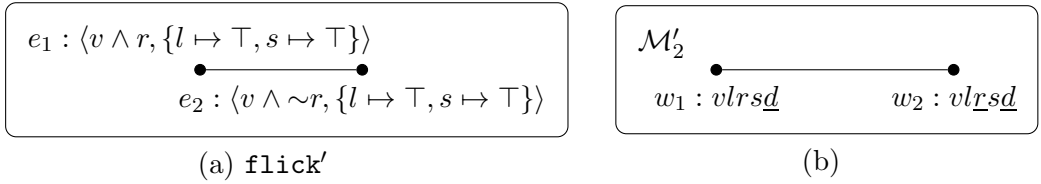
For instance, if we consider the action $\flick'$ depicted in Figure~\ref{fig:pink.event.models'}a, and obtained from the action $\flick$ by connecting events $e_1$ and $e_2$,
we can see that $\cM_1 \otimes \flick'$ is isomorphic to the epistemic model $\cM_2'$ depicted in Figure~\ref{fig:pink.event.models'}b.
Model $\cM_2'$ only differs from $\cM_2$ in that worlds $w_1$ and $w_2$ are now connected, revealing that the agent cannot tell where is the diamond. In other words, $\flick'$ encodes the same ontic changes in the world than $\flick$ but does not capture the agent's observation about the position of the diamond.
\begin{figure}
\begin{center}
\begin{minipage}{0.4\textwidth}
\begin{center}
\begin{tikzpicture}[align=center,node distance=3cm, framed, background rectangle/.style={draw=black, rounded corners}]]
\node[point] (w) [label=below:$w_1 : vlrsd$, label={[label distance=2mm]175:$\cM_3$}] {};
\node[point] (v) [label=below:$w_2 : vl\underline{r}sd$,right=of w] {};
\end{tikzpicture}
\\
(a)
\end{center}
\end{minipage}
\hspace{2cm}
\begin{minipage}{0.4\textwidth}
\begin{center}
\begin{tikzpicture}[align=center,node distance=3cm, framed, background rectangle/.style={draw=black, rounded corners}]]
\node[point] (w) [label=below:$w_1 : \underline{v}lrsd$, label={[label distance=2mm]175:$\cM_4$}] {};
\node[point] (v) [label=below:$w_2 : \underline{v}l\underline{r}sd$,right=of w] {};
\end{tikzpicture}
\\
(b)
\end{center}
\end{minipage}
\hspace*{6.65cm}
\end{center}
	\caption{Epistemic models representing (a) the state corresponding to execution of action $\takel$ or $\taker$ according to the agent's knowledge about the position of the diamond and (b) the result of executing $\move$ on $\cM_3$ in (a).
	}\label{fig:pink.state0-4b}
\end{figure}
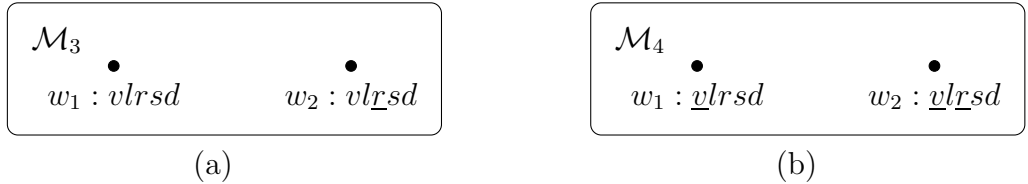

Finally, Figure~\ref{fig:pink.state0-4b}a (model~$\cM_3$) corresponds to an state where the thief is inside the vault with the diamond in her possession.
Intuitively, this model represent the result of executing action $\takel$ or $\taker$ according to the agent's knowledge about the position of the diamond, whereas Figure~\ref{fig:pink.state0-4b}b represents model $\cM_4 = \cM_3 \otimes \move$, that is, the result of moving (outside the vault) afterwards.

\subsection{Planning in Answer Set Programming}

In this subsection, we informally describe the ASP methodology for representing problems of classical planning: for a more formal approach we refer to~\cite{LIFSCHITZ200239,cakascsc18a}.

Our purpose is, by now, merely introductory, trying to illustrate the main representational features of ASP planning that are relevant for the current discussion. 
For this reason, we delay the introduction of a formal semantics for later on, when epistemic ASP is introduced.

ASP specifications or \emph{logic programs} are sets of rules of the form:
\begin{eqnarray*}
a \leftarrow b_1,\dots,b_n, \Not c_1, \dots, \Not c_m 
\end{eqnarray*}
where $\leftarrow$ is a reversed implication, so its left hand side is called the rule \emph{head} and its right hand side receives the name of rule \emph{body}.
The rule head $a$ is a proposition or the symbol $\bot$: when the latter happens, the rule is a \emph{constraint} forbidding that the body holds.
The elements in the body ($b_i$ and $\Not c_j$) are called \emph{literals}, where $b_i$ and $c_j$ are propositions.
The ordering among literals is irrelevant: in fact, commas just stand for conjunctions.
Operator $\Not$ represents \emph{default negation}: we read $\Not c_j$ as ``there is no evidence on $c_j$'' or ``there is no way to prove $c_j$''. We will also use non-deterministic rules of the form:
\begin{eqnarray*}
m \ \{a_1; \dots; a_k \} \ n \leftarrow Body
\end{eqnarray*}
where $m,n\geq 0$ meaning that, when $Body$ holds, we can arbitrarily add a subset of atoms from $\{a_1,\dots,a_k\}$ of cardinality $c$ with $n \leq c \leq m$.
ASP allows a second kind of negation called \emph{strong} or \emph{explicit} that we will represent $\sneg p$.
From a practical point of view, we can assume that ``$\sneg p$'' is a new kind of atom and that models cannot make $p$ and $\sneg p$ true simultaneously.

For a simple representation of rules describing transitions we partly adopt the syntax of~\cite{cakascsc18a} and assume that, for each proposition $p$, we handle a second atom ``$\previous p$'' that stands for $p$ at the immediately \emph{previous} situation.
In temporal ASP, actions are represented as regular propositions in $\at$: the rest of non-action propositions in $\at$ are called \emph{fluents}.

Taking all these considerations, the behaviour of action $\takel$ can be encoded in ASP as the following three rules:
\begin{IEEEeqnarray}{c C l}
d &\leftarrow& \takel
	\label{eq:takel.postcondition}
\\
\bot &\leftarrow& \takel, \sneg\previous v
	\label{eq:takel.pre1}
\\
\bot &\leftarrow& \takel, \previous r
	\label{eq:takel.pre2}
\end{IEEEeqnarray}
where~\eqref{eq:takel.postcondition} describes its direct effect (grasping the diamond) whereas the other two rules describe the preconditions: \eqref{eq:takel.pre1} forbids executing $\takel$ when the thief was not in the vault and~\eqref{eq:takel.pre2} forbids its execution when the diamond is in the right pedestal.
Analogously, the following three rules encode the action $\taker$:
\begin{IEEEeqnarray}{c C l}
d &\leftarrow& \taker
	\label{eq:taker.postcondition}
\\
\bot &\leftarrow& \taker, \sneg \previous v
	\label{eq:taker.pre1}
\\
\bot &\leftarrow& \taker, \sneg \previous r
	\label{eq:taker.pre2}
\end{IEEEeqnarray}
Similarly, actions $\flick$ and $\move$ are respectively represented by the rules:
\begin{IEEEeqnarray}{r C l}
l &\leftarrow& \flick
	\label{eq:flick.post}
\\
\bot &\leftarrow& \flick, \sneg \previous v
	\label{eq:flick.pre}
\\
v &\leftarrow& \move, \sneg\previous v
	\label{eq:move.post1}
\\
\sneg v &\leftarrow& \move, \previous v
	\label{eq:move.post2}
\end{IEEEeqnarray}
Rule~\eqref{eq:flick.post} states the postcondition of $\flick$, that is, the light is turned on, while rule~\eqref{eq:flick.pre} states its precondition, that is, we forbid its execution when being outside vault.
Rules~\eqref{eq:move.post1} and~\eqref{eq:move.post2} together state the postconditions of $\move$: its execution just flips the truth value of $v$.

To illustrate the use of indirect effects, we can just assert that seeing the diamond~($s$) just depends on being in the vault~($v$) with the light on~($l$), regardless of the actions that have been performed to reach that situation.
This is naturally captured by the single ASP rule:
\begin{IEEEeqnarray}{c C l}
s &\leftarrow& v, l
	\label{eq:see}
\end{IEEEeqnarray}

Default negation allows a natural representation of the inertia law, that is, a fluent normally remains unchanged, unless there is evidence on the contrary.
We divide the set of fluents into \emph{inertial} 
$\cF^\cI = \set{v,l,r,d}$ and \emph{non-inertial} fluents
$\cF^\cN = \set{s}$.
Inertia is then guaranteed by the pair of rules:
\begin{IEEEeqnarray}{r C l}
f &\leftarrow& \previous f, \Not\!\sneg f
	\label{eq:pink.6}
\\
\sneg f &\leftarrow& \sneg\previous f, \Not f
	\label{eq:pink.7}
\end{IEEEeqnarray}
for each inertial fluent~$f \in \cF^\cI$.
In our running example,
the fluent~($s$) is considered false by default, that is, the following rule:
\begin{IEEEeqnarray}{c C l}
\sneg s &\leftarrow& \Not s
	\label{eq:pink.8}
\end{IEEEeqnarray}
sating that, unless~($s$) is proved, we should consider that its explicit negation~($\sneg s$) is added to the set of conclusions.

If we consider now the following simplification of Example~\ref{ex:pink} where the value of all fluents in the initial situation are known, we can use ASP to obtain a plan to achieve the thief's goal.

\begin{examplecont}{ex:pink}\label{ex:pink.simple}
Consider now the case where the thief is outside the vault ($\sneg v$) and already \emph{knows the Pink Panther is inside the vault on the right ($r$) pedestal}.\qed
\end{examplecont}

Listing~\ref{list:pink.simple} shows the full encoding representing Example~\ref{ex:pink.simple} in the syntax of the ASP solver {\ttfamily telingo}.\footnote{\url{https://github.com/potassco/telingo}.}
In this syntax, $\leftarrow$ is represented as {\tt :-}, $\previous p$ as {\tt \textquotesingle p} and $\sneg p$ as {\tt -p}.
By copying that encoding into a file called {\ttfamily pink.lp} and executing ``{\ttfamily telingo pink.lp}'' we can obtain a plan for this example.

\lstset{ 
  basicstyle=\ttfamily\small,
  breaklines=false,                
  captionpos=b,                    
  frame=single,	                   
  keepspaces=true,                 
  keywordstyle=\bf\color{blue},       
  morekeywords={always, dynamic, initial, final, not},            
  otherkeywords={\#program, \#show},
  numbers=left,                    
  numbersep=10pt,                   
  numberstyle=\tiny, 
  rulecolor=\color{black},         
  showspaces=false,                
  showstringspaces=false,          
  showtabs=false,                  
  stepnumber=5,                    
  tabsize=2,	                   
  upquote=true
}
\lstinputlisting[float,
  caption={Program corresponding to Example~\ref{ex:pink.simple} in the syntax of the solver {\ttfamily telingo}.},
  label={list:pink.simple}
]{pink.lp}

As we said before, an important difference between ASP and event models is the treatment of indirect effects. In the example, note how $s$ was captured by the ASP rule \eqref{eq:see}, which only depends on other fluents ($v$ and $l$) but does not refer to the original actions that caused their values.
There is no flexible way to express this feature when using event models: the value of $s$ must be expressed as a direct effect of actions $\flick$ and $\move$, that respectively determine the values of $l$ and $v$. If new actions could alter the values of those fluents, directly or indirectly, then their effect on $s$ should also be included in their post-conditions. This is, in fact, an instance of the well-known \emph{ramification problem}~\cite{KAu86}.

The ramification problem may also occur for epistemic effects, if we are not careful enough for their treatment. For instance, the encoding of Example~\ref{ex:pink} in~\cite{AndersenBJ12} did not use our fluent $s$ (where the ramification is evident), but transferred the problem to the epistemic effect of \emph{knowing} the position of the diamond ($r$).
Again, this epistemic effect must be specified as a direct consequence of some action, something that does not always seem reasonable.

In the rest of the paper, we develop an extension of DEL and ASP, that we denote DEL[ASP] where the ontic and epistemic effects of actions can be described both in a   direct or indirect way, as needed.
In particular, in DEL[ASP], the observation of the diamond position when the thief is in the illuminated vault can be expressed by the following rule analogous to
\eqref{eq:see}:
\begin{IEEEeqnarray}{c C l}
\bO r &\leftarrow& v, l
	\label{eq:obs}
\end{IEEEeqnarray}
Here, $\bO r$ is a new construct whose intuitive meaning is that ``the actual value of the fluent~$r$ is observed (by the agent).''
Note that we just replace the fluent~$s$ in \eqref{eq:see}, whose intuitive meaning is that the agent sees the position of the diamond, by this new construct $\bO r$, which makes this observation affect the agent's beliefs.

\subsection{Epistemic Logic Programs}

As explained before, we will use FAEEL for the interpretation of epistemic specifications, the epistemic extension of ASP. FAEEL inherits both the ASP capabilities for knowledge representation and the AEL capabilities for introspective reasoning.
For the sake of coherence, we adapt the definitions of~\cite{Cabalar2019faeel} to the use of Kripke structures.
We also add strong negation~\cite{Pearce96,nelson1949} to FAEEL, which for simplicity, is restricted to be used only in front of atoms, something that suffices for the goals of this paper and is usual in the ASP literature~\cite{GelfondL91,LifschitzTT99}.

\emph{Autoepistemic formulas} are defined according to the following grammar:
\[
\fF ::= \bot \mid p \mid \sneg p \mid \fF_1 \wedge \fF_2 \mid \fF_1 \vee \fF_2 \mid \fF_1 \to \fF_2 \mid \bL \varphi
\]
where \mbox{$p \in \at$} is a proposition.

We assume that ASP notation is transformed into its logical counterpart: $\Not F$ is represented as $\neg F$, commas in the body are replaced by conjunctions and rules $G \leftarrow F$ are written from left to right $F \to G$.

Intuitively, the new construct, $\bL\varphi$, is read as ``it is reasonable (for the planning agent) to believe~$\varphi$.''
Weak or intuitionistic negation is defined as usual: $\neg\varphi \eqdef \varphi \to \bot$.
The knowledge modality is defined as true belief:
$\bK \varphi \eqdef \varphi \wedge \bL \varphi$.
We also introduce the following abbreviations:
\begin{IEEEeqnarray*}{r?C?l}
\bU p &\eqdef& \neg p \wedge \neg\!\sneg p
\\
\bO p &\eqdef&  (p \to \bL \neg\neg p) \wedge (\sneg p \to \bL \neg\neg\!\sneg p) \wedge (\bU p \to \bL \neg\neg\!\bU p)
\end{IEEEeqnarray*}
whose respective intuitive meanings are that the value of proposition $p \in \at$ is \emph{undefined} and that the actual value of proposition $p \in \at$ is \emph{observed}.
Note that when an atom $p$ is observed, the agent's beliefs have to agree with the actual value of the atom $p$. The use of double negation here implies that only the agent's beliefs will be modified, without providing any justification for believing $p$.
Besides, we assume all previous abbreviations too, that is,
\mbox{$(\fF \leftarrow \fG) \eqdef (\fG \to \fF)$},
\
\mbox{$\fF \leftrightarrow \fG \eqdef (\fF \to \fG) \wedge (\fG \to \fF)$},
and
\ $\top \eqdef \bot\to\bot$.
An \emph{autoepistemic theory}~$\Gamma$ is a set of autoepistemic formulas as defined above.
When a theory is a singleton, we will usually write just $\varphi$ instead of~$\set{\varphi}$.

A \emph{literal}~$L$ is either a proposition $p \in \at$ or a proposition preceded by strong negation~$\sneg p$
and by $\lit \eqdef \at \cup \{\sneg p \mid p \in \at\}$ we denote the set of all literals over the signature~$\at$.

We define next an (autoepistemic) HT-model, as a combination of modal epistemic logic with the logic of here-and-there (HT)~\cite{heyting30a}, an intermediate logic with two intuitionistic worlds, $h$ (standing for ``here'') and $t$ (read as ``there'') satisfying $h \leq t$.
\begin{definition}[HT-Model]
Given a set of propositional symbols~$\at$,
an \emph{\htmodel} is a
quadruple
\mbox{$\cM = \tuple{W,\cK,V^h,V^t}$}
where
\begin{itemize}
\item $W$ is a set of worlds,
\item $\cK \subseteq W \times W$ is an accessibility relation on $W$, and
\item $V^x : W \longrightarrow 2^{\lit}$ is a valuation with $x \in \set{h,t}$ 
such that \mbox{$V^h(w) \subseteq V^t(w)$} 
for all $w \in W$.
\end{itemize}
$D(\cM) = W$
denotes the domain of $M$.
A \emph{belief \htmodel} is an \htmodel where $\cK = \cK \times \cK'$ with $\cK' = \cK \setminus \set{w_0}$ for some \emph{distinguish world} $w_0 \in \cK$.
\qed
\end{definition}
A \htmodel $\cM = \tuple{W,\cK,V^h,V^t}$ is called \emph{total}
iff $V^h = V^t$.
Furthermore, by $\cM^t \eqdef \tuple{W,\cK,V^t,V^t}$ we denote the total model corresponding to~$\cM$.
Satisfaction of autoepistemic formulas is then given by the following recursive definition:
\begin{itemize}
\item $\cM, w \not\models \bot$,
\item $\cM, w \models L$ iff $L \in V^h(w)$ for any $L \in \lit$,
\item $\cM, w \models \varphi_1 \wedge \varphi_2$ iff $\cM, w \models \varphi_1$ and $\cM, w \models \varphi_2$,
\item $\cM, w \models \varphi_1 \vee \varphi_2$ iff $\cM, w \models \varphi_1$ or $\cM, w \models \varphi_2$,

\item $\cM, w \models \varphi_1 \to \varphi_2$ iff
both $\cM, w \not\models \varphi_1$ or $\cM, w \models \varphi_2$
and\\
\hspace*{4.6cm}
$\cM^t, w \not\models \varphi_1$ or $\cM^t, w \models \varphi_2$,


\item $\cM, w \models \bL \varphi$ iff $\cM, w' \models \varphi$ for all $w'$ with $(w,w') \in \cK$
\end{itemize}
As usual, we say that $\cM$ is an \emph{\htmodel} of some theory~$\Gamma$, in symbols \mbox{$\cM \models \Gamma$}, iff
\mbox{$\cM,w\models \varphi$} for every world~$w \in D(\cM)$ and every formula~$\varphi \in \Gamma$.
As mentioned before, when $\Gamma = \set{\varphi}$ is a singleton we will omit the brackets, so that
\mbox{$\cM \models \varphi$} stands for \mbox{$\cM \models \set{\varphi}$}
and holds iff \mbox{$\cM,w\models \varphi$} for every world~$w \in D(\cM)$.

\begin{definition}[Bisimulation between HT-models]\label{def:int.prec}
Let
\mbox{$\cM_1 = \tuple{W_1,\cK_1,V_1^h,V_1^t}$}
and
\mbox{$\cM_2 = \tuple{W_2,\cK_2,V_2^h,V_2^t}$}
be two \htmodels.
Given some binary relation $Z \subseteq W_1\times W_2$,
we write $\cM_1 \preceq^Z \cM_2$ iff
\begin{itemize}
\item every $(w_1,w_2) \in Z$ satisfies $V^t(w_1) = V^t(w_2)$ and  $V^h(w_1) \subseteq V^h(w_2)$,

\item for every $(w_1,w_1') \in \cK_2$, there is $(w_2,w_2') \in \cK_2$ such that $(w_1',w_2') \in Z$,

\item for every $(w_2,w_2') \in \cK_2$, there is $(w_1,w_1') \in \cK_2$ such that
\mbox{$(w_1',w_2') \in Z$}.
\end{itemize}
We write $\cM_1 \preceq \cM_2$ iff there is a total relation $Z$ s.t. $\cM_1 \preceq^Z \cM_2$.
We also say that $\cM_1$ and $\cM_2$ are \emph{bisimilar}, in symbols $\cM_1 \approx \cM_2$, iff
there is a total relation $Z$ s.t $\cM_1 \preceq^Z \cM_2$ and $\cM_2 \preceq^Z \cM_1$.
As usual, we write
$\cM_1 \prec \cM_2$ iff $\cM_1 \preceq \cM_2$ and $\cM_1 \not\approx \cM_2$.\qed
\end{definition}

\begin{definition}[Equilibrium model]
A total belief \htmodel $\cM$ of some theory~$\Gamma$ is said to be an \emph{equilibrium model} of ~$\Gamma$ iff there is no other belief \htmodel~$\cM'$ of $\Gamma$ such that $\cM' \prec \cM$.\qed
\end{definition}

Given some information cell $\cM = \tuple{W,\cK,V}$ and some set of literals
\mbox{$I\subseteq \lit$},
by $\cM + I$ we denote the total belief \htmodel
$\cM' = \tuple{W',\cK',V',V'}$
where
\mbox{$W' = \set{w_0} \cup W$} with $w_0 \notin W$, \, $\cK' = W' \times W$ and
$V' : W' \longrightarrow 2^{\lit}$
satisfies $V'(w) = V(w)$ for all $w \in W$ and $V'(w_0) = I$.

\begin{definition}[World view]
Given a set of propositions~$\at$,
an information cell $\cM = \tuple{W,\cK,V}$ over $\lit$ is called a \emph{world view} of some theory~$\Gamma$ iff the following two conditions hold:
\begin{itemize}
\item $\cM + V(w)$ is an equilibrium model of $\Gamma$, for every world $w \in W$,

\item $\cM + I$ is not an equilibrium model of $\Gamma$ for every set of literals $I \subseteq \lit$ satisfying $I \neq V(w)$ for all $w \in W$, and

\item either $p \notin V(w)$ or $\sneg p \notin V(w)$ for all $p \in \at$ and $w \in W$.
\end{itemize}
We say that a theory~$\Gamma$ is \emph{consistent} iff it has some world view and by $\WV[\Gamma]$ we denote the set of all world views of~$\Gamma$.\qed
\end{definition}

\begin{examplecont}{ex:pink}\label{ex:pink.formula}
For instance, the formula
\begin{gather}
\varphi_0 = \sneg v \wedge \sneg l \wedge (r \vee \sneg r) \wedge \sneg s \wedge \sneg d
	\label{eq:formula.state0}
\end{gather}
has a unique world view that is depicted in Figure~\ref{fig:pink.state0.auto}.
\darkred
Note that every propositional theory has a unique world view~\cite{Cabalar2019faeel} that corresponds to the set of all answer sets of the theory.
Furthermore, since $\varphi_0$ contains no negation, its answer sets coincide with its minimal classical models when we treat each strong negated literal $\sneg p$ as a new atom.\qed
\end{examplecont}


\section{Dynamic Epistemic Logic with ASP Updates: DEL[ASP]}

In this section, we present the major contribution of this paper, DEL[ASP], an instance of the abstract DEL framework where updating objects correspond to logic programs.
Our motivation is twofold: on the one hand, to allow unrestricted use of indirect effects (both ontic and epistemic); on the other hand, to preserve the ASP representation of non-epistemic planning problems without need of any adjustment or modification.
We illustrate these two objectives through our running example.

\subsection{Characterising information cells in FAEEL}

Let us start by showing how any information cell can be represented by some autoepistemic formula in FAEEL.
Note that world views are an information cell over $\lit$, so they represent a kind of three valued epistemic models where each proposition $p$ can be true $p \in V(w)$, false $\sneg p \in V(w)$ or undefined $p,\sneg p \notin V(w)$.
We will show here how (two-valued) information cells over $\at$ can be simply represented as propositional formulas in \FAEEL, allowing to map these three valued epistemic models into standard two valued ones.

\begin{examplecont}{ex:pink.formula}\label{ex:pink.formula.b}
Continuing with our running example,
we can see now that this model satisfies either \mbox{$p \in V(w)$} or \mbox{$\sneg p \in V(w)$} for every proposition \mbox{$p \in \set{v,l,r,s,d}$} and world $w \in \set{w_1,w_2}$.
Hence, we can map this model into a (two-valued) information cell by considering as true every proposition $p \in V(w)$ and as false every proposition satisfying \mbox{$\sneg p \in V(w)$}.
It is easy to see that the obtained information cell is precisely the model $\cM_0$
depicted in Figure~\ref{fig:pink.state0-4}a,
that is,
the epistemic model corresponding to the initial situation of Example~\ref{ex:pink}.
In this sense, we can use the formula~$\varphi_0$ to represent the initial state of this example.\qed
\end{examplecont}

\begin{figure}
\begin{center}
\begin{minipage}{\textwidth}
\begin{center}
\begin{tikzpicture}[align=center,node distance=6cm, framed, background rectangle/.style={draw=black, rounded corners}]]
\node[point] (w) [label=below:$w_1 : \set{\sneg v, \sneg l , r , \sneg s ,\sneg d }$, label={[label distance=2mm]175:$\cM_0^{wv}$} ] {};
\node[point] (v) [label=below:$w_2 : \set{\sneg v, \sneg l , \sneg r , \sneg s ,\sneg d }$, right=of w] {};
\path[-] (w) edge (v);
\end{tikzpicture}
\end{center}
\end{minipage}
\end{center}
	\caption{Unique world view of the formula~$\varphi_0 = \sneg v \wedge \sneg l \wedge (r \vee \sneg r) \wedge \sneg s \wedge \sneg d$.}\label{fig:pink.state0.auto}
\end{figure}
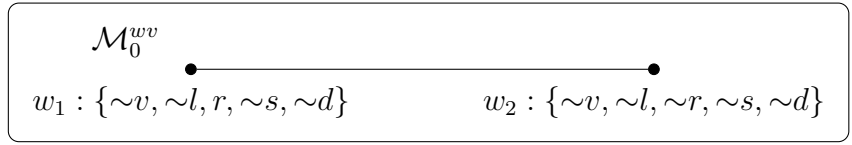

\begin{definition}\label{def:char.autoepistemic}
Given some information cell $\cM = \tuple{W,\cK,V}$,
its \emph{characteristic (autoepistemic) formula} is 
$\varphi_\cM \eqdef \bigvee_{w \in W}  \varphi_\cM^w$
where $\varphi_\cM^w$ is defined as follows:
\begin{IEEEeqnarray*}{c+x*}
\varphi_\cM^w \quad \eqdef \quad \Big(\bigwedge_{p \in V(w)} p \Big) \wedge \Big( \bigwedge_{p \in \at \setminus V(w)} \hspace{-5pt} \sneg p\Big)
&\qed
\end{IEEEeqnarray*}
\end{definition}

\begin{definition}[Bisimulation]
Given two models $\cM_1 = \tuple{W_1,\cK_1,V_1}$ and $\cM_2 = \tuple{W_2,\cK_2,V_2}$,
we say that they are \emph{bisimilar}, in symbols $\cM_1 \approx \cM_2$,
if and only if $\tuple{W_1,\cK_1,V_1,V_1} \approx \tuple{W_2,\cK_2,V_2,V_2}$.\qed
\end{definition}

\begin{definition}
Given a set of propositions~\mbox{$P \subseteq \at$},
we say that an \htmodel \mbox{$\cM =\tuple{W,\cK,V^h,V^t}$} over $\lit$ is \mbox{\emph{$P$-classical}} iff every world $w \in W$ and proposition $p \in P$ satisfy
that either $p \in V^h(w)$ or $\sneg p \in V^h(w)$ holds.
A theory~$\Gamma$ is \mbox{\emph{$P$-classical}} iff it is consistent
and, in addition, every world view is \mbox{$P$-classical}.
\qed
\end{definition}

\begin{definition}
Given a set of propositions~\mbox{$P \subseteq \at$}
and any \mbox{$P$-classical} total \htmodel \mbox{$\cM =\tuple{W,\cK,V^t,V^t}$} over~$\lit$, by \mbox{$\cM \downarrow P = \tuple{W,\cK,V}$} we denote the model over~$P$
where
\mbox{$V : W \longrightarrow 2^{P}$}
is a valuation
satisfying
$V(w) = V^t(w) \cap P$ for every world~$w \in W$.\qed
\end{definition}

\begin{proposition}\label{lem:char.autoepistemic}
Let $\cM$ be an information cell over~$\at$.
Then, $\varphi_\cM$ has a unique world view $\cM'$
and we have that $\cM$ and $\cM' \downarrow \at$ are bisimilar.
\end{proposition}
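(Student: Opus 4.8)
The plan is to reduce everything to the known behaviour of \emph{propositional} theories in FAEEL and then verify a routine bisimulation. First I would observe that $\varphi_\cM$ is a propositional formula: it is built solely from the literals $p$ and $\sneg p$ using $\wedge$ and $\vee$, and contains neither the modality $\bL$ nor intuitionistic negation $\neg$. Hence, by the cited property of~\cite{Cabalar2019faeel}, $\varphi_\cM$ has a \emph{unique} world view $\cM'$, and the set of valuations occurring in $\cM'$ is exactly the set of answer sets of $\varphi_\cM$; moreover, since $\varphi_\cM$ has no negation, these answer sets coincide with the minimal classical models of $\varphi_\cM$ once each strong-negated literal $\sneg p$ is read as a fresh atom.

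The second step is to compute these minimal models explicitly. For each world $w \in W$, the disjunct $\varphi_\cM^w$ is a complete and consistent conjunction: it asserts $p$ for every $p \in V(w)$ and $\sneg p$ for every $p \in \at \setminus V(w)$, so (reading $\sneg p$ as a fresh atom) its unique minimal model is $I_w \eqdef V(w) \cup \set{\sneg p \mid p \in \at \setminus V(w)}$. A minimal model of the disjunction $\varphi_\cM = \bigvee_{w} \varphi_\cM^w$ must be minimal among $\set{I_w \mid w \in W}$; but every $I_w$ contains exactly one of $p, \sneg p$ for each $p \in \at$, so if $I_w \subseteq I_{w'}$ and some literal of $I_{w'}$ were absent from $I_w$, then $I_w$ would contain its opposite, forcing $I_{w'}$ to contain both $p$ and $\sneg p$, a contradiction. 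Thus the $I_w$ are pairwise incomparable, all of them are minimal, and the set of answer sets of $\varphi_\cM$ is precisely $\set{I_w \mid w \in W}$. In particular $\cM'$, viewed as a total HT-model, is $\at$-classical (each $I_w$ decides every proposition), so $\cM' \downarrow \at$ is well defined, is again an information cell, and the valuation of each of its worlds is $I_w \cap \at = V(w)$; hence the set of valuations occurring in $\cM' \downarrow \at$ equals $\set{V(w) \mid w \in W}$.

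Finally I would establish the bisimulation $\cM \approx \cM' \downarrow \at$. Both are information cells, so both accessibility relations are universal. Writing $V''$ for the valuation of $\cM' \downarrow \at$, I would take $Z \eqdef \set{(w,w') \mid V(w) = V''(w')}$; since the two models exhibit the same set of valuations, $Z$ is total on both sides. Each pair in $Z$ trivially satisfies the valuation clause of Definition~\ref{def:int.prec}, as $V^t$-equality and $V^h$-inclusion both follow from $V(w) = V''(w')$ for total models. The back-and-forth clauses are immediate: from any $Z$-related pair, universality lets us reach any target world on either side, and because the valuation sets coincide we can always pick a matching world to stay inside $Z$. This yields a single $Z$ witnessing both $\cM \preceq^Z (\cM' \downarrow \at)$ and $(\cM' \downarrow \at) \preceq^Z \cM$, i.e.\ bisimilarity.

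The only real work sits in the second paragraph: the claim that the answer sets of $\varphi_\cM$ are exactly the complete literal sets $I_w$, which rests on the incomparability of the $I_w$ (so that passing from the minimal models of each disjunct to the minimal models of the whole disjunction loses nothing) together with the reduction of answer sets to minimal classical models for negation-free theories. Once this is pinned down, uniqueness of the world view comes for free from the cited propositional result, and the bisimulation is purely structural.
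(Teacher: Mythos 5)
Your proof is correct and takes essentially the same approach as the paper's own: appeal to the cited result that propositional theories have a unique world view, identify the answer sets of $\varphi_\cM$ with its minimal classical models (reading each $\sneg p$ as a fresh atom), observe that these are exactly the complete literal sets determined by the worlds of $\cM$, and conclude bisimilarity. You merely make explicit two steps the paper leaves implicit---the pairwise incomparability of those complete literal sets and the concrete bisimulation relation $Z$---which is a welcome tightening of the same argument, not a different one.
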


\begin{proof}
First note that, since $\varphi_\cM$ is a propositional formula, it has a unique world view~$\cM'$~\cite[Proposition~3]{Cabalar2019faeel}.
Let $\cM = \tuple{W,\cK,V}$ and $\cM' = \tuple{W',\cK',V'}$.
Then, we have that $w' \in W'$ iff $V'(w')$ is a stable model of $\varphi_\cM$.
Note also that the stable models of $\varphi_\cM$ are exactly its classical models understood as sets of literals.
Hence, for every $w' \in W'$, there is some $w \in W$ such that $V'(w') = V(w) \cup \sneg\,(\at \setminus V(w))$ and vice-versa.
Consequently, $\cM$ and $\cM' \downarrow \at$ are bisimilar.
\end{proof}

\begin{examplecont}{ex:pink.formula.b}\label{ex:pink.formula.c}
Continuing with our running example, we have $\varphi_{\cM_0} = \varphi_{\cM_0}^{w_1} \vee \varphi_{\cM_0}^{w_2}$ with
\begin{IEEEeqnarray*}{l C l}
\varphi_{\cM_0}^{w_1} &=& \sneg v \wedge \sneg l \wedge \phantom{\sneg} r \wedge \sneg s \wedge \sneg d
\\
\varphi_{\cM_0}^{w_2} &=& \sneg v \wedge \sneg l \wedge \sneg r \wedge \sneg s \wedge \sneg d
\end{IEEEeqnarray*}
By applying distributivity of conjunctions over disjunctions, it is easy to see that $\varphi_{\cM_0}$ is classically (and intuitionistically) equivalent to~\eqref{eq:formula.state0}.
As a result, $\cM_0^{wv}$ is the unique world view of $\varphi_{\cM_0}$
and, as expected from Proposition~\ref{lem:char.autoepistemic}, it can be checked that it satisfies $\cM_0^{wv} \downarrow \at = \cM_0$.\qed
\end{examplecont}

\subsection{Epistemic Model Updates with FAEEL}

In this section, we show how autoepistemic equilibrium logic can be used to define epistemic model updates just by using an extended signature.
Given a set of propositions $S \subseteq \at$,
we define
$\previous S \eqdef \setm{ \previous p }{ p \in S \cap \at}$
and
$\atbi = \at \cup \previous \at$
where $\previous p$ intuitively means that $p$ is true in the previous state.
It will also be convenient to use $\previous$ in front of any propositional formula~$\varphi$ such that $\previous \varphi$ is as an abbreviation for the formula obtained by writing $\previous$ in front of every proposition occurring in $\varphi$.
\begin{figure}
\begin{center}
\begin{minipage}{\textwidth}
\begin{center}
\begin{tikzpicture}[align=center,node distance=7.5cm, framed, background rectangle/.style={draw=black, rounded corners}]]
\node[point] (w) [label=below:$w_1 : \previous\set{\sneg v, \sneg l , r , \sneg s , \sneg d }$ \\ $\cup\, \set{ \move, v, \sneg l , r , \sneg s ,\sneg d }$, label={[label distance=2mm]175:$\cM_1^{wv}$} ] {};
\node[point] (v) [label=below:$w_2 : \previous\set{\sneg v, \sneg l , \sneg r , \sneg s ,\sneg d }$ \\ $\cup\, \set{ \move, v, \sneg l , \sneg r , \sneg s ,\sneg d }$, right=of w] {};
\path[-] (w) edge (v);
\end{tikzpicture}
\end{center}
\end{minipage}
\end{center}
	\caption{Unique world view of the program~\theory\ref{prg:pink.trans0-1}.}
	\label{fig:pink.trans0-1}
\end{figure}
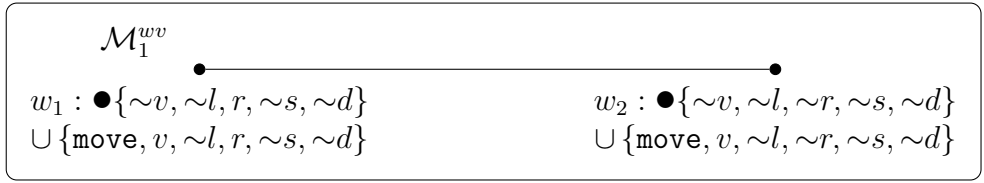

\begin{examplecont}{ex:pink.formula}\label{ex:pink.trans0-1}
Let $\GammaPink$ be a theory containing formulas~\eqref{eq:takel.postcondition}-\eqref{eq:obs} and let $\newtheory\label{prg:pink.trans0-1} = \GammaPink \cup \set{ \move , \previous \varphi_{\cM_0} }$.
This program has a unique world view shown in Figure~\ref{fig:pink.trans0-1}.
Note that, if we disregard all the information corresponding to the previous situation (that is all literals preceded by~$\previous$) and the action~$\move$, then we have the same information as the epistemic model~$\cM_1$ in Figure~\ref{fig:pink.state0-4}a.
In other words, $\theory\ref{prg:pink.trans0-1}$ encodes the transition that occurs between the epistemic models~$\cM_0$ and~$\cM_1$ when executing action~$\move$.\qed
\end{examplecont}


As shown in the example above, we can represent the transition between two epistemic models as an autoepistemic theory.
Let us now formalise this intuition.
We begin introducing some auxiliary definitions.

Given a set of epistemic models $\cS = \set{ \cM_1, \cM_2, \dotsc }$ where each $\cM_i$ is a model over a set of atoms $\at$ of the form $\cM_i = \tuple{W_i,\cK_i,V_i}$ and
satisfying $W_i \cap W_j = \emptyset$ for all $\cM_i,\cM_j \in \cS$ with $i \neq j$, by
\mbox{$\bigsqcup \cS \eqdef \tuple{W,\cK,V}$},
we denote an epistemic model where
\begin{itemize}
\item $W' = \bigcup\setm{ W_i }{ \cM_i \in \cS }$,
\item $\cK' = \bigcup\setm{ \cK_i }{ \cM_i \in \cS }$, and
\item $V' : W' \longrightarrow 2^{\at}$ with $V'(w) = V_i(w)$ for all $w \in W_i$ and all $\cM_i \in \cS$.
\end{itemize}
As usual, if $\cS = \set{\cM_1 , \cM_2 }$,
we just write $\cM_1 \sqcup \cM_2$ instead of $\bigsqcup \set{\cM_1,\cM_2}$.

\begin{definition}
Let $P$ be a set of atoms and $\Gamma$ be some \mbox{$P$-classical}  autoepistemic theory.
Then, by
\mbox{$\model{\Gamma, P} \eqdef \bigsqcup \setm{ \cM \downarrow P }{ \cM \in \WV[\Gamma ] }$}
we denote the epistemic model capturing all the world views of $\Gamma$ projected into the vocabulary~$P$.
If $\Gamma$ is not \mbox{$P$-classical}, we assume that
\mbox{$\model{\Gamma, P}$} is not defined.\qed
\end{definition}

\darkred
In other words, for every $P$-classical autoepistemic theory~$\Gamma$,
$\model{\Gamma, P}$ is the two-valued epistemic model that has an information cell for every world view of~$\Gamma$ such that the valuation of every proposition $p \in P$ in every world in $\model{\Gamma, P}$ corresponds to the valuation of that proposition in that same world in the corresponding world view.
\black
Recall that, in a $P$-classical theory, all its worlds views satisfy either $p$ or $\sneg p$ for every proposition~$p \in P$.
This is necessary so it is possible to map three-valued world views into two-valued epistemic models.
We could remove this restriction by allowing three-valued epistemic models, but we have decided to stay as close as possible to the original DEL semantics, which is only two-valued.


\begin{examplecont}{ex:pink.trans0-1}\label{ex:pink.trans0-1b}
Note that if $P \subseteq \at$, then the epistemic model $\model{\Gamma,P}$ always corresponds to the current situation, discarding all information about the previous one.
In this sense, 
if we consider the program $\theory\ref{prg:pink.trans0-1}$ of Example~\ref{ex:pink.trans0-1} and the epistemic model~$\cM_1$ of Figure~\ref{fig:pink.state0-4}b,
we have that $\model{\theory\ref{prg:pink.trans0-1},P} = \cM_1$
where $P = \set{v,d,r,l,s}$ is the set of fluent of Example~\ref{ex:pink}.

As a further example, consider now the theory~$\newtheory\label{prg:pink.trans1-2} = \GammaPink \cup \set{\flick,\previous\varphi_{\cM_1}}$.
Then,
$\theory\ref{prg:pink.trans1-2}$
has two world views which correspond to the two cell informations in the epistemic model~$\cM_2$ depicted in Figure~\ref{fig:pink.state0-4}c.
This explains why we need to join together all the world views of the theory in a single epistemic model.
Every world view, which becomes an information cell, represents the knowledge the agent will have after executing the action $\flick$, while the set of all world views represent the knowledge the agent had before executing it.\qed
\end{examplecont}

Let us now define the transition between states borrowing the notion of product update from DEL.

\begin{definition}
Given an information cell~$\cM$ over~$P \subseteq \at$ and a theory~$\Gamma$ over~$\atbi$
such that $\Gamma \cup \set{ \previous\varphi_{\cM} }$ is $\atbi$-classical,
we define:
\begin{itemize}
\item the \emph{product update} of $\cM$ with respect to $\Gamma$
as the epistemic model \mbox{$\cM \otimes \Gamma \,\eqdef\, \model{\Gamma \cup \set{ \previous\varphi_{\cM} },P}$}.

\item
the binary relation $\erelation{\cM}{\Gamma} \subseteq D(\cM) \times D(\cM \otimes \Gamma)$
s.t.
\mbox{$(w,w') \in \erelation{\cM}{\Gamma}$}
iff
$\cM',w' \models \previous\varphi_\cM^w$
with $\cM' = \model{\Gamma \cup \set{ \previous\varphi_{\cM} }, \previous\at}$.\qed
\end{itemize}

\end{definition}

\begin{examplecont}{ex:pink.trans0-1b}\label{ex:pink.trans0-1c}
It is now easy to see that
\begin{IEEEeqnarray*}{cCcCcClCl}
\cM_0 \otimes \theory\ref{prg:pink.trans-move} 
	&=& \model{\theory\ref{prg:pink.trans-move}  \cup \set{ \previous\varphi_{\cM_0} },P} 
	&=& \model{\theory\ref{prg:pink.trans0-1},P} 
	&=& \cM_1^{wv} \downarrow P
	&=& \cM_1
\\
\cM_1 \otimes \theory\ref{prg:pink.trans-flick} 
	&=& \model{\theory\ref{prg:pink.trans-flick}  \cup \set{ \previous\varphi_{\cM_0} },P} 
	&=& \model{\theory\ref{prg:pink.trans1-2},P} 
	&=& \cM_2
\end{IEEEeqnarray*}
with $\newtheory\label{prg:pink.trans-move} =  \GammaPink \cup \set{ \move }$
and
$\newtheory\label{prg:pink.trans-flick} =  \GammaPink \cup \set{ \move }$
respectively being the theories representing the execution of the action $\move$ and $\flick$ according to program~$\GammaPink$, $\cM_1^{wv}$ the epistemic model depicted in \mbox{Figure~\ref{fig:pink.trans0-1}}
and $\cM_1$ and $\cM_2$ the epistemic models depicted in \mbox{Figure~\ref{fig:pink.state0-4}b}
and
\mbox{Figure~\ref{fig:pink.state0-4}c}.
In other words, $\cM_1$ is the result of executing action $\move$ in epistemic model $\cM_0$ according to the description provided by $\GammaPink$.
Furthermore, for each world \mbox{$w \in \set{w_1,w_2}$}, we have 
\mbox{$\cM_1^{wv}\!, w \models \previous\varphi_\cM^w$}.
In its turn, this implies \mbox{$\cM_1^{wv}\!\downarrow \previous \at, w \models \previous\varphi_\cM^w$}
and, thus, that
$\erelation{\cM_0}{\theory\ref{prg:pink.trans-move}} = \set{(w_1,w_1),(w_2,w_2)}$ is the identity\footnote{Note that, for the sake of clarity, the names of worlds have been chosen so that
$\erelation{\cM_0}{\theory\ref{prg:pink.trans-move}}$ is the identity, but this is not necessarily the case. In fact, worlds from $\cM_0$ and $\cM_1$ could be disjoint or even be switched so that $w_1$ could be called $w_2$ and vice-versa.}, that is, it maps each world in $\cM_0$ to a world in~$\cM_1$ with the same name.
Similarly, $\cM_2$ is the result of executing the action $\flick$ in the epistemic model $\cM_1$ according to the description provided by $\GammaPink$
and we can check that
$\erelation{\cM_1}{\theory\ref{prg:pink.trans-flick}}$
is also the identity.
\qed
\end{examplecont}

We define now the updating evaluation for ASP epistemic specifications for epistemic models.
In a nutshell, this evaluation is the result of combining the evaluation for each individual information cell in the model.

\begin{definition}[ASP updating evaluation]
Given any epistemic model~$\cM$ and theory~$\Gamma$,
the \emph{ASP updating evaluation}
is a pair $\tuple{\otimes,\cR}$
satisfying
\begin{IEEEeqnarray*}{l ?C? l +x*}
\cM \otimes \Gamma &\eqdef& \bigsqcup\setm{\cM' \otimes \Gamma}{ \cM' \in \cell{\cM}}
\\
\erelation{\cM}{\Gamma} 
&\eqdef& \bigcup\setm{\cM' \otimes \Gamma}{ \cM' \in \cell{\cM}} &\qed
\end{IEEEeqnarray*}
\end{definition}

We can now directly apply Definition~\ref{def:del.sat}
to obtain the satisfaction of DEL[ASP] formulas, that is,
DEL formulas in which the updating objects are autoepistemic theories.


\begin{figure}
\begin{center}
\begin{minipage}{0.18\textwidth}
\begin{center}
\begin{tikzpicture}[align=center,node distance=2.5cm, framed, background rectangle/.style={draw=black, rounded corners}]]
\node[point] (w) [label=below:$w_1 : \underline{vl}r\underline{sd}$, label={[label distance=2mm]90:$\cM_0'$}] {};
\end{tikzpicture}
\\
(a)
\end{center}
\end{minipage}
\begin{minipage}{0.18\textwidth}
\begin{center}
\begin{tikzpicture}[align=center,node distance=2.5cm, framed, background rectangle/.style={draw=black, rounded corners}]]
\node[point] (w) [label=below:$w_1 : v\underline{l}r\underline{sd}$, label={[label distance=2mm]90:$\cM_0' \otimes \theory\ref{prg:pink.trans-move}$}] {};
\end{tikzpicture}
\\
(b)
\end{center}
\end{minipage}
\begin{minipage}{0.18\textwidth}
\begin{center}
\begin{tikzpicture}[align=center,node distance=2.5cm, framed, background rectangle/.style={draw=black, rounded corners}]]
\node[point] (w) [label=below:$w_1 : v\underline{l}r\underline{s}d$, label={[label distance=2mm]90:$\cM_0' \otimes \theory\ref{prg:pink.trans-move} \otimes \theory\ref{prg:pink.trans-taker}$}] {};
\end{tikzpicture}
\\
(c)
\end{center}
\end{minipage}
\begin{minipage}{0.42\textwidth}
\begin{center}
\begin{tikzpicture}[align=center,node distance=2.5cm, framed, background rectangle/.style={draw=black, rounded corners}]]
\node[point] (w) [label=below:$w_1 : \underline{vl}r\underline{s}d$, label={[label distance=2mm]90:$\cM_0' \otimes \theory\ref{prg:pink.trans-move} \otimes \theory\ref{prg:pink.trans-taker} \otimes \theory\ref{prg:pink.trans-move}$}] {};
\end{tikzpicture}
\\
(d)
\end{center}
\end{minipage}
\end{center}
	\caption{Epistemic models corresponding execution of the sequence of actions $\tuple{\move,\taker,\move}$ in initial state of Example~\ref{ex:pink.simple}, represented here by the model $\cM_0'$ in (a).
	}\label{fig:pink.simply.states}
\end{figure}

\begin{examplecont}{ex:pink.simple}
Let us now resume the simplified version of our running example introduced in Example~\ref{ex:pink.simple}.
In this case, the initial situation can be represented by an epistemic model $\cM_0'$ depicted in Figure~\ref{fig:pink.simply.states}a.
Then, it can be checked that
$$\cM_0' \models \bK[\theory\ref{prg:pink.trans-move}][\theory\ref{prg:pink.trans-taker}][\theory\ref{prg:pink.trans-move}] (\sneg v \wedge d)$$
holds
with
$\newtheory\label{prg:pink.trans-taker} =  \GammaPink \cup \set{ \taker }$.
In other words, the thief knows that after executing the sequence of actions $\tuple{\move,\taker,\move}$ 
she will be out of the vault with the diamond.
That is, this sequence of actions is a valid plan that achieves the goal of getting out of the vault with the diamond regardless of the actual initial situation.
This means that this is a conformant plan.

For the sake of completeness, Figures~\ref{fig:pink.simply.states}b, c and d respectively depict the epistemic models
$\cM_0' \otimes \theory\ref{prg:pink.trans-move}$,
$\cM_0' \otimes \theory\ref{prg:pink.trans-move} \otimes \theory\ref{prg:pink.trans-taker}$
and
$\cM_0' \otimes \theory\ref{prg:pink.trans-move} \otimes \theory\ref{prg:pink.trans-taker} \otimes \theory\ref{prg:pink.trans-move} $.\qed
\end{examplecont}

\begin{figure}
\begin{center}
\begin{minipage}{0.4\textwidth}
\begin{center}
\begin{tikzpicture}[align=center,node distance=2.5cm, framed, background rectangle/.style={draw=black, rounded corners}]]
\node[point] (w) [label=below:$w_1 : \underline{vl}r\underline{sd}$,
label={[label distance=2mm]90:$\cM_1 \otimes \theory\ref{prg:pink.trans-ttakel} $} ] {};
\node[point] (v) [label=below:$w_2 : \underline{vlrs}d$, right=of w] {};
\path[-] (w) edge (v);
\end{tikzpicture}
\\
(a)
\end{center}
\end{minipage}
\hspace{1cm}
\begin{minipage}{0.4\textwidth}
\begin{center}
\begin{tikzpicture}[align=center,node distance=2.5cm, framed, background rectangle/.style={draw=black, rounded corners}]]
\node[point] (w) [label=below:$w_1 : v\underline{l}r\underline{s}d$, label={[label distance=2mm]90:$\cM_1 \otimes \theory\ref{prg:pink.trans-ttakel} \otimes \theory\ref{prg:pink.trans-ttaker}$} ] {};
\node[point] (v) [label=below:$w_2 : v\underline{lrs}d$, right=of w] {};
\path[-] (w) edge (v);
\end{tikzpicture}
\\
(b)
\end{center}
\end{minipage}
\end{center}
	\caption{Epistemic models corresponding to (a) the execution of $\ttakel$ in the model~$\cM_1 = \cM_0 \otimes \theory\ref{prg:pink.trans-ttakel}$ and (b) the execution of $\ttaker$ in the resulting state.
	 }\label{fig:pink.conformant.states}
\end{figure}

\begin{examplecont}{ex:pink}\label{ex:pink.try}
As a further example,
consider another variation of Example~\ref{ex:pink} where we have actions $\ttakel$ and $\ttaker$ that are similar to $\takel$ and $\ttakel$, but that can be executed even when the diamond is not in the right location, having no effect in such case.
This can be represented by a theory $\GammaPink'$ obtained from $\GammaPink$ by replacing rules~(\ref{eq:takel.postcondition}-\ref{eq:taker.pre2}) by the following rules:
\begin{IEEEeqnarray}{c C l}
d &\leftarrow& \ttakel \wedge \previous\sneg r
	\label{eq:ttakel}
\\
\bot &\leftarrow& \ttakel \wedge \sneg \previous v
	\label{eq:ttakel.pre}
\\
d &\leftarrow& \ttaker \wedge  \previous r
	\label{eq:ttaker}
\\
\bot &\leftarrow& \ttaker \wedge \sneg \previous v
	\label{eq:ttaker.pre}
\end{IEEEeqnarray}
Now, we can check that
\begin{gather}
\cM_0 \models \bK[\theory\ref{prg:pink.trans-move}][\theory\ref{prg:pink.trans-ttakel}][\theory\ref{prg:pink.trans-ttaker}][\theory\ref{prg:pink.trans-move}] (\sneg v \wedge d)
    \label{eq:1:ex:pink.try}
\end{gather}
holds
with
$\newtheory\label{prg:pink.trans-ttakel} =  \GammaPink' \cup \set{ \ttakel }$
and
$\newtheory\label{prg:pink.trans-ttaker} =  \GammaPink' \cup \set{ \ttaker }$.
Recall that
$\cM_0 \otimes \theory\ref{prg:pink.trans-move} = \cM_1$
is the epistemic model depicted in Figure~\ref{fig:pink.state0-4}a.
We also can check that
$\cM_0 \otimes \theory\ref{prg:pink.trans-move} \otimes \theory\ref{prg:pink.trans-ttakel} \otimes \theory\ref{prg:pink.trans-ttaker} \otimes \theory\ref{prg:pink.trans-move} = \cM_4$ is the epistemic model depicted in Figure~\ref{fig:pink.state0-4b}b.
Figure~\ref{fig:pink.conformant.states} depicts the epistemic models corresponding to intermediate states.
With these models, we can see that~\eqref{eq:1:ex:pink.try}
holds.
That is, the thief knows that after executing the sequence of actions actions
$\tuple{\move,\ttakel,\ttaker,\move}$ she will be outside the vault with the diamond.
Therefore, this sequence of actions constitutes a conformant plan for this problem.
Note that the thief achieves her goal without ever getting to know where the diamond actually was.\qed
\end{examplecont}

\section{Conditional Planning in DEL[ASP]}

In this section, we show how to use DEL[ASP] to represent conditional plans.
Let start by defining what a plan is by
introducing the following plan language from~\cite{AndersenBJ12}.

\begin{definition}[Plan Language]
Given disjunct sets of actions $\cA$ and fluents~$\cF$, a \emph{plan} is an expression $\pi$ built with the following grammar:
\begin{gather*}
\pi \quad::=\quad a \mid \sskip \mid \iif \bK\varphi \tthen \pi \eelse \pi \mid \pi;\pi
\end{gather*}
where
\mbox{$a \in \cA$}
and
$\varphi$ is a formula over~$\cF$.
We write {\rm ($\iif \bK\varphi \tthen \pi$)} as a shorthand for the plan {\rm ($\iif \bK\varphi \tthen \pi \eelse \sskip$)}.\qed
\end{definition}

As mentioned in the introduction, conditional plans contain ``if-then-else'' structures that allow the agent to apply different strategics depending on the knowledge she has obtained along the execution of the plan.
For instance,
\begin{gather}
\move\;;\, \flick\;;\,\iif \bK r \tthen \taker \eelse \takel \;;\, \move 
	\label{eq:conditional.plan}
\end{gather}
is a conditional plan for the problem laid out in Example~\ref{ex:pink}.  It is a plan since, as we will prove next, the thief eventually takes the diamond out in all possible outcomes, and it is conditional because the third step contains an alternative decision. If the thief acts according to her knowledge about the diamond position at that point, the plan is guaranteed to succeed.
We will show that in fact, after executing the actions $\move$ and $\flick$, the thief knows that she will know where the diamond is.

Let us now formalise these intuitive ideas by providing a translation from plans into DEL[ASP] as follows:

\begin{definition}[Translation]
Let $\cA \subseteq \at$ and $\cF \subseteq \at$ be a pair of disjoint sets of propositions, respectively corresponding to actions and fluents.
The translation of a plan $\pi$ over $\cA$ applied to a formula $\psi$ over $\cF$, with respect to a theory $\Gamma$ over $\atbi$ is denoted as $\den{\pi}^\Gamma \psi$ and is recursively defined as:
\begin{IEEEeqnarray*}{l?C?l+x*}
\den{a}^\Gamma \, \psi &\eqdef& \tuple{\Gamma \cup \set{a}}\top \wedge [\Gamma \cup \set{a}]\psi
\\
\den{\sskip}^\Gamma \, \psi &\eqdef& \psi
\\
\den{\pi;\pi'}^\Gamma \, \psi &\eqdef& \den{\pi}^\Gamma\,(\den{\pi'}^\Gamma\,\psi)
\\
\den{\iif \bK\varphi \tthen \pi \eelse \pi' }^\Gamma \, \psi &\eqdef& (\bK\varphi \to \den{\pi}^\Gamma\psi) \wedge (\sneg\bK\varphi \to \den{\pi'}^\Gamma \psi)
\end{IEEEeqnarray*}
where $\varphi$ is any formula over $\cF$.\qed
\end{definition}

As a first remark, note that the translation $\den{a}^\Gamma$ of an action $a$ is always made by adding a constant theory $\Gamma$ that defines the behaviour of the action domain (fixing the transition relation). 
As a result, each elementary action in the plan becomes a complete autoepistemic theory $\Gamma \cup \{a\}$ in the translation.
When $\Gamma$ is clear from the context, we will simply write $\den{\pi}$ instead of $\den{\pi}^\Gamma$.
Conjunct $[\Gamma \cup \set{a}] \ \psi$ requires that $\psi$ becomes true in any resulting state whereas $\tuple{\Gamma \cup \set{a}}\top$ ensures that action $a$ is executable indeed.

We check next that the evaluation of plan~\eqref{eq:conditional.plan} corresponds to what we have already seen in Example~\ref{ex:pink.trans0-1c}.
For the sake of clarity, we gather together all rules of the theory~$\Gamma_{\pink}$ in Figure~\ref{fig:Gpink}.
\begin{figure}
\begin{center}
\begin{tabular}{ | l | l | }
\hline
\begin{minipage}{0.45\textwidth}
\begin{IEEEeqnarray*}{c C l +x*}
d &\leftarrow& \takel
	&\eqref{eq:takel.postcondition}
\\
\bot &\leftarrow& \takel, \sneg\previous v
	&\eqref{eq:takel.pre1}
\\
\bot &\leftarrow& \takel, \previous r
	&\eqref{eq:takel.pre2}
\\
d &\leftarrow& \taker
	&\eqref{eq:taker.postcondition}
\\
\bot &\leftarrow& \taker, \sneg \previous v
	&\eqref{eq:taker.pre1}
\\
\bot &\leftarrow& \taker, \sneg \previous r
	&\eqref{eq:taker.pre2}
\\
l &\leftarrow& \flick
	&\eqref{eq:flick.post}
\\
\bot &\leftarrow& \flick, \sneg \previous v
	&\eqref{eq:flick.pre}
\\
v &\leftarrow& \move, \sneg\previous v
	&\eqref{eq:move.post1}
\\
\sneg v &\leftarrow& \move, \previous v
	&\eqref{eq:move.post2}
\\
\end{IEEEeqnarray*}
\end{minipage}
&
\begin{minipage}{0.45\textwidth}
\begin{IEEEeqnarray*}{c C l +X+}
s &\leftarrow& v, l
	&\eqref{eq:see}
\\
\bO r &\leftarrow& v, l
	&\eqref{eq:obs}
\\
\sneg s &\leftarrow& \Not s
	&\eqref{eq:pink.8}
\\
v &\leftarrow& \previous v, \Not\!\sneg v
\\
\sneg v &\leftarrow& \sneg\previous v, \Not v
\\
l &\leftarrow& \previous l, \Not\!\sneg l
\\
\sneg l &\leftarrow& \sneg\previous l, \Not l
\\
r &\leftarrow& \previous r, \Not\!\sneg r
\\
\sneg r &\leftarrow& \sneg\previous r, \Not r
\\
d &\leftarrow& \previous d, \Not\!\sneg d
\\
\sneg d &\leftarrow& \sneg\previous d, \Not d
\end{IEEEeqnarray*}
\end{minipage}
\\
\hline
\end{tabular}
\end{center}
	\caption{Theory~$\Gamma_{\pink}$: the left column contains the direct effects and preconditions of actions while the right one contains the indirect effects and the inertia axioms.}
	\label{fig:Gpink}
\end{figure}

\begin{examplecont}{ex:pink.trans0-1c}\label{ex:pink.DEASP0-1b}
\normalfont
Going on with our running example, let us consider plans $\den{\pi}^{\GammaPink}$ simply denoted as $\den{\pi}$. We have seen that
$\cM_0 \otimes \theory\ref{prg:pink.trans-move} = \cM_1$
where
$\theory\ref{prg:pink.trans-move} =  \GammaPink \cup \set{ \move }$ was the theory representing the execution of  action $\move$ according to~$\GammaPink$.
We have also seen that $\erelation{\cM_0}{\theory\ref{prg:pink.trans-move}} = \set{(w_1,w_1),(w_2,w_2)}$ is the identity.
Similarly,
given theory
$\newtheory\label{prg:pink.obs} = \GammaPink \cup \set{\flick}$ representing the execution of action $\flick$,
we have
$\cM_1 \otimes \theory\ref{prg:pink.obs} = \cM_2$
and
$\erelation{\cM_1}{\theory\ref{prg:pink.obs}} = \set{(w_1,w_1),(w_2,w_2)}$.
Figure~\ref{fig:pink.temporal-r} shows these three models together with the corresponding relations~$\cR_{\cM_0,\theory\ref{prg:pink.trans-move}}$ and~$\cR_{\cM_1,\theory\ref{prg:pink.obs}}$.
Looking at this figure, we observe that $\cM_2,w_1 \models v \wedge \bK r$ and, thus, also $\cM_2,w_1 \models v \wedge (\bK r \vee \bK \sneg r)$.
From this we can conclude that
$\cM_1,w_1  \models [\theory\ref{prg:pink.obs}](v \wedge (\bK r \vee \bK \sneg r))$.
Note that
$\cM_1 \models \tuple{\theory\ref{prg:pink.obs}} \top$
holds and, thus, it follows 
$$\cM_1,w_1  \models \den{ \flick }(v \wedge (\bK r \vee \bK \sneg r))$$
Now we can check that
$\cM_0,w_1  \models [\theory\ref{prg:pink.trans-move}]\den{ \flick }(v \wedge (\bK r \vee \bK \sneg r))$
and
$\cM_0 \models \tuple{\theory\ref{prg:pink.trans-move}} \top$
hold
and, thus, we can conclude
$$\cM_0,w_1  \models \den{ \move } \big(\den{ \flick }(v \wedge (\bK r \vee \bK \sneg r)) \big)$$
An analogous reasoning, allow us to see that the same holds for
$\cM_0,w_2$ and, thus, we obtain
$$\cM_0   \models \bK \den{ \move } \big(\den{ \flick }(v \wedge (\bK r \vee \bK \sneg r)) \big)$$
By definition, these two facts imply
\begin{gather}
	\cM_0  \models \bK \den{ \move ; \flick }(v \wedge (\bK r \vee \bK \sneg r))
	\label{eq:1:ex:pink.DEASP0-1b}
\end{gather}
In other words, the thief knows that, after executing actions $\move$ and $\flick$, she will be inside the vault and that she will know where the diamond is.
So she will be ready for the next step: using her knowledge to decide what is the suitable action to continue the plan.
\qed
\end{examplecont}

\begin{figure}[htbp]
\begin{center}
\begin{tikzpicture}[align=center,node distance=1.5cm and 2.5cm ]
\node[point] (w) [label=above:$w_1 : \underline{vl}r\underline{sd}$] {};
\node[point] (v) [below=of w , label=below:$w_2 : \underline{vlrsd}$,  label={[label distance=10mm]-90:$\cM_0$} ] {};

\node[point] (w1) [right=of w, label=above:$w_1 : v\underline{l}r\underline{sd}$ ] {};
\node[point] (v1) [right=of v, label=below:$w_2 : v\underline{lrsd}$, label={[label distance=10mm]-90:$\cM_1$}] {};

\node[point] (w2) [right=of w1,label=below:$w_1 : vlrs\underline{d}$] {};
\node[point] (v2) [right=of v1, label=below:$w_2 : vl\underline{r}s\underline{d}$ , label={[label distance=10mm]-90:$\cM_2$}] {};

\node[point] (w3) [right=of w2, label=below:$w_1 : vlrsd$] {};
\node (v3) [right=of v2, label={[label distance=10mm]-90:$\cM_{31}$}] {};

\node[point] (w4) [right=of w3, label=below:$w_1 : \underline{v}lrsd$] {};
\node (v4) [right=of v3, label={[label distance=10mm]-90:$\cM_{41}$}] {};

\path[-] (w) edge (v);
\path[-] (w1) edge (v1);

\path[->,dashed] (w) edge (w1);
\path[->,dashed] (v) edge (v1);

\path[->,dashed] (w1) edge (w2);
\path[->,dashed] (v1) edge (v2);

\path[->,dashed] (w2) edge (w3);
\path[->,dashed] (w3) edge (w4);

\node[draw=black, rounded corners, fit={($(w)+(-30pt,30pt)$) ($(v)+(30pt,-60pt)$)}](FIt) {};
\node[draw=black, rounded corners, fit={($(w1)+(-30pt,30pt)$) ($(v1)+(30pt,-60pt)$)}](FIt1) {};
\node[draw=black, rounded corners, fit={($(w2)+(-30pt,30pt)$) ($(v2)+(30pt,-60pt)$)}](FIt2) {};
\node[draw=black, rounded corners, fit={($(w3)+(-30pt,30pt)$) ($(v3)+(30pt,-60pt)$)}](FIt3) {};
\node[draw=black, rounded corners, fit={($(w4)+(-30pt,30pt)$) ($(v4)+(30pt,-60pt)$)}](FIt4) {};

\node  at ($0.5*(v)+0.5*(v1)+(0,-80pt)$) (a1) {$\move$};
\node  at ($0.5*(v1)+0.5*(v2)+(0,-80pt)$) (a1) {$\flick$};
\node  at ($0.5*(v2)+0.5*(v3)+(0,-80pt)$) (a1) {$\taker$};
\node  at ($0.5*(v3)+0.5*(v4)+(0,-80pt)$) (a1) {$\move$};
\end{tikzpicture}
\vspace{-0.75cm}
\end{center}
	\caption{Execution of the sequence of actions $\tuple{\move,\flick,\taker,\move}$ starting at $\cM_0,w_1$ of Example~\ref{ex:pink}.
	We have $\cM_{i+1} = \cM_i \otimes (\Gamma_{\pink} \cup \set{a_i})$ with $a_i$ the corresponding action in the sequence.
	The dotted arrows depict the $\cR$ relation associated with the update of $\cM_i$ with respect to $\Gamma_{\pink} \cup \set{a_i}$.
	Note that action $\taker$ is not executable in $\cM_2,w_2$ and, as a result, $w_2$ has no associated world in $\cM_3$.
	 }\label{fig:pink.temporal-r}
\end{figure}

Let us now continue with the thief's reasoning process after the execution of the first two actions.

\begin{examplecont}{ex:pink.DEASP0-1b}\label{ex:pink.DEASP0-1b.plan}
\normalfont
We will show now that
\begin{gather}
\cM_2,w_1 \models \den{\iif \bK r \tthen \taker \eelse \takel ; \move } (v \wedge d)
	\label{eq:1:ex:pink.DEASP0-1b.plan}
\end{gather}
is satisfied.
First note that
\begin{IEEEeqnarray*}{l ?C? l}
\cM_{31} \otimes \theory\ref{prg:pink.trans-move} &=& \cM_{41}
\\
\erelation{\cM_2}{\theory\ref{prg:pink.trans-move}} &=& \set{(w_1,w_1)}
\end{IEEEeqnarray*}
and that $\cM_{41},w_1 \models \sneg v \wedge d$
and, thus, we get 
$\cM_{41},w_1 \models \den{ \move } (\sneg v \wedge d)$.
Let now
$\newtheory\label{prg:pink.trans2-3} \eqdef \GammaPink \cup \set{ \taker }$.
Then, we have
\begin{IEEEeqnarray*}{l ?C? l}
\cM_2 \otimes \theory\ref{prg:pink.trans2-3} &=& \cM_{31}
\\
\erelation{\cM_2}{\theory\ref{prg:pink.trans2-3}} &=& \set{(w_1,w_1)}
\end{IEEEeqnarray*}
from where we get $\cM_2,w_1 \models [\theory\ref{prg:pink.trans2-3}] \den{ \move } (\sneg v \wedge d)$.
Furthermore, this implies
$\cM_2,w_1 \models \den{\taker} \den{ \move } (\sneg v \wedge d)$,
which in its turn implies 
$$\cM_2,w_1 \models \bK r \to \den{\taker} \den{ \move } (\sneg v \wedge d)$$
Note that
$\cM_2,w_1 \models \bK r$
and, thus,
$$\cM_2,w_1 \models \sneg\bK r \to \den{\taker} \den{ \move } (\sneg v \wedge d)$$
also follows.
As a result, we can see that~\eqref{eq:1:ex:pink.DEASP0-1b.plan} holds.
Now we follow the reasoning from Example~\ref{ex:pink.DEASP0-1b} to show that
$\cM_0, w_1 \models \den{\eqref{eq:conditional.plan}} ( \sneg v \wedge d)$.
That is,~\eqref{eq:conditional.plan} is a plan that achieves the goal of Example~\ref{ex:pink} in the case that the diamond is in the right pedestal.
Analogously, Figure~\ref{fig:pink.temporal-l}
shows the models needed to prove $\cM_0, w_2 \models \den{\eqref{eq:conditional.plan}} (\sneg v \wedge d)$, that is, when the diamond was on the left.
As a result, we obtain
$\cM_0 \models \bK \den{\eqref{eq:conditional.plan}} (\sneg v \wedge d)$.
In other words, the thief knows that after executing~\eqref{eq:conditional.plan},
she will succeed in her goal: being outside of the vault with the diamond.\qed
\end{examplecont}
\begin{figure}[htbp]
\begin{center}
\begin{tikzpicture}[align=center,node distance=1.5cm and 2.5cm ]
\node[point] (w) [label=above:$w_1 : \underline{vl}r\underline{sd}$] {};
\node[point] (v) [below=of w , label=below:$w_2 : \underline{vlrsd}$,  label={[label distance=10mm]-90:$\cM_0$} ] {};

\node[point] (w1) [right=of w, label=above:$w_1 : v\underline{l}r\underline{sd}$ ] {};
\node[point] (v1) [right=of v, label=below:$w_2 : v\underline{lrsd}$, label={[label distance=10mm]-90:$\cM_1$}] {};

\node[point] (w2) [right=of w1,label=below:$w_1 : vlrs\underline{d}$] {};
\node[point] (v2) [right=of v1, label=below:$w_2 : vl\underline{r}s\underline{d}$ , label={[label distance=10mm]-90:$\cM_2$}] {};

\node (w3) [right=of w2] {};
\node[point] (v3) [right=of v2, label=below:$w_2 : vl\underline{r}sd$ , label={[label distance=10mm]-90:$\cM_{32}$}] {};

\node (w4) [right=of w3] {};
\node[point] (v4) [right=of v3, label=below:$w_2 : \underline{v}l\underline{r}sd$, label={[label distance=10mm]-90:$\cM_{42}$}] {};

\path[-] (w) edge (v);
\path[-] (w1) edge (v1);

\path[->,dashed] (w) edge (w1);
\path[->,dashed] (v) edge (v1);

\path[->,dashed] (w1) edge (w2);
\path[->,dashed] (v1) edge (v2);

\path[->,dashed] (v2) edge (v3);
\path[->,dashed] (v3) edge (v4);

\node[draw=black, rounded corners, fit={($(w)+(-30pt,30pt)$) ($(v)+(30pt,-60pt)$)}](FIt) {};
\node[draw=black, rounded corners, fit={($(w1)+(-30pt,30pt)$) ($(v1)+(30pt,-60pt)$)}](FIt1) {};
\node[draw=black, rounded corners, fit={($(w2)+(-30pt,30pt)$) ($(v2)+(30pt,-60pt)$)}](FIt2) {};
\node[draw=black, rounded corners, fit={($(w3)+(-30pt,30pt)$) ($(v3)+(30pt,-60pt)$)}](FIt3) {};
\node[draw=black, rounded corners, fit={($(w4)+(-30pt,30pt)$) ($(v4)+(30pt,-60pt)$)}](FIt4) {};

\node  at ($0.5*(v)+0.5*(v1)+(0,-80pt)$) (a1) {$\move$};
\node  at ($0.5*(v1)+0.5*(v2)+(0,-80pt)$) (a1) {$\flick$};
\node  at ($0.5*(v2)+0.5*(v3)+(0,-80pt)$) (a1) {$\takel$};
\node  at ($0.5*(v3)+0.5*(v4)+(0,-80pt)$) (a1) {$\move$};
\end{tikzpicture}
\vspace{-0.75cm}
\end{center}
	\caption{Execution of the sequence of actions $\tuple{\move,\flick,\takel,\move}$ starting at $\cM_0,w_2$ of Example~\ref{ex:pink}. This figure is analogous to Figure~\ref{fig:pink.temporal-r} but replacing action $\taker$ by $\takel$ and models $\cM_{31}$ and $\cM_{41}$ by $\cM_{32}$ and $\cM_{42}$, respectively.
	 }\label{fig:pink.temporal-l}
\end{figure}

To conclude this section,
we formalise the concepts of planing task and  planning solution.

\begin{definition}[Planning task]
Given the disjoint sets of actions~\mbox{$\cA \subseteq \at$}
and fluents~\mbox{$\cF \subseteq \at$},
a \emph{planning task}
is a triple $\Pi = \tuple{\Gamma_0,\Gamma,\varphi_g}$
where $\Gamma_0$ is a theory over $\at \setminus \cA$ defining the initial state,
$\Gamma$ is a theory over $\atbi$ defining the interpretation of actions
and $\varphi_g$ is the \emph{goal formula} over~$\cF$.\qed
\end{definition}

\begin{definition}[Planning solution]
A plan $\pi$ is a \emph{conditional solution} for the planning task $\Pi = \tuple{\Gamma_0,\Gamma,\varphi_g}$ iff $\model{\Gamma_0,\cF} \models \den{\pi}^\Gamma \varphi_g$.
A conditional solution without occurrences of the ``$\iif$-$\tthen$-$\eelse$'' construct is called a \emph{conformant solution}.\qed
\end{definition}

In particular, Example~\ref{ex:pink} can be now formalised as the planning task
$\Pi = \tuple{\Gamma_0,\GammaPink,\varphi_g}$
where $\Gamma_0$ is a singleton containing  $\eqref{eq:formula.state0}$, describing the initial situation, and $\varphi_g = \sneg v \wedge d$.
Then, we can see that~\eqref{eq:conditional.plan}
is a conditional solution for the planning task~$\Pi$.
We can also formalise Example~\ref{ex:pink.simple} as the planning task
$\Pi = \tuple{\Gamma_0',\GammaPink,\varphi_g}$
where
$\Gamma_0'$ contains the single formula:
\begin{gather}
\varphi_0' = \sneg v \wedge \sneg l \wedge r \wedge \sneg s \wedge \sneg d
	\label{eq:simple.formula.state0}
\end{gather}
that describes the corresponding initial situation.
It can be checked that~\eqref{eq:conditional.plan} is also a conditional solution for $\Pi'$, though this example also has the simpler (conformant) solution:
\begin{gather}
\move\;;\, \taker \;;\, \move 
	\label{eq:simple.plan}
\end{gather}
Finally, Example~\ref{ex:pink.try} becomes the task
$\Pi = \tuple{\Gamma_0,\GammaPink',\varphi_g}$ for which
\begin{gather}
\move\;;\, \ttaker \;;\, \ttakel \;;\, \move 
	\label{eq:conformant.plan}
\end{gather}
is a conformant solution.


\section{Conclusions and Future Work}

As discussed in~\cite{Bolander17}, the traditional DEL[$\cE$] approach with event model updates is a \emph{semantic approach}, where states and actions are represented as semantic objects, epistemic and event models respectively.
On the other hand, DEL[ASP] is a \emph{syntactic approach}, where states and actions are represented as knowledge-bases, that is, sets of formulas known to be true.
Semantic and syntactic approaches are mutually dual, with the semantic approach modelling \emph{ignorance} (the more ignorance, the bigger the state) and the syntactic approach modelling \emph{knowledge} (the more knowledge, the bigger the knowledge-base).
The generalisation of DEL for abstract updating objects can easily accommodate both approaches: it suffices with allowing both event models and epistemic programs to occur in the dynamic operator, and selecting the corresponding updating evaluation.

Another interesting observation is that both DEL[$\cE$] and ASP can be considered as
generalisations of the STRIPS planning language
in orthogonal directions.
On the one hand,
DEL[$\cE$] allows planning in domains where the world is not fully observable, the effects of actions are not necessarily deterministic and where sensing actions may allow to gain knowledge about the actual state of the world.
On the other hand, ASP introduces high level KR features like the treatment of indirect effects, action qualifications, state constraints or recursive fluents (for motivation about the need of such features we refer to~\cite{lifschitz2002}). 
The approach presented here, DEL[ASP], combines the strengths of both generalisations so that it is possible to use high level KR features in non-fully observable or non-deterministic domains where observing the world may be needed to achieve a valid plan.

Similar to our approach, the action language $m\mathcal{AL}$~\cite{baral2013} also combined the treatment of indirect effects and action qualifications with the possibility of defining sensing actions.
The main handicap of $m\mathcal{AL}$ with respect to DEL[ASP] is that the former only allows ramifications on the ontic effects, but not on the epistemic ones, as we did for instance with rule~\eqref{eq:obs}.
In $m\mathcal{AL}$, as in DEL[$\cE$], this indirect observation needs to be encoded as a direct effect of all actions that may affect those fluents. On the other hand, an advantage of both
DEL[$\cE$] and $m\mathcal{AL}$ is that they can be applied
on domains that involve several agents and in which those agents may even hold false beliefs~\cite{Bolander14},
while, so far, DEL[ASP] is only able to deal with domains involving a single agent.
Extending DEL[ASP] to cover these domains is a matter of future work.
It will be also interesting to study the relation between DEL[ASP] and Temporal ASP~\cite{cakascsc18a} and the possibility of extending the latter with an epistemic modality to deal with non-fully observable or non-deterministic domains.

Regarding the computation of planning solutions in DEL[ASP], it is worth to mention that the algorithm based on planning trees described in~\cite{AndersenBJ12} for DEL[$\cE$] is general enough and does not really depend of the kind of updating object used.
In this sense, we can apply that same algorithm with the only variation of using the ASP updating evaluation when we expand the tree.
Then, solutions can be retrieved from the planning tree in exactly the same way as described there.

\paragraph{Acknowledgements} This work has been partially funded by the Centre International de Math\'{e}matiques et d'Informatique de Toulouse through contract ANR-11-LABEX-0040-CIMI within the program ANR-11-IDEX-0002-02, grant 2016-2019 ED431G/01 CITIC Center (Xunta de Galicia, Spain), grant TIN 2017-84453-P (MINECO, Spain).

\bibliographystyle{elsarticle-num}
\bibliography{refs,akku,lit,procs}

\begin{thebibliography}{10}
\expandafter\ifx\csname url\endcsname\relax
  \def\url#1{\texttt{#1}}\fi
\expandafter\ifx\csname urlprefix\endcsname\relax\def\urlprefix{URL }\fi
\expandafter\ifx\csname href\endcsname\relax
  \def\href#1#2{#2} \def\path#1{#1}\fi

\bibitem{Ghallab2004}
M.~Ghallab, D.~S. Nau, P.~Traverso, Automated planning - theory and practice,
  Elsevier, 2004.

\bibitem{TuSB07}
P.~H. Tu, T.~C. Son, C.~Baral, Reasoning and planning with sensing actions,
  incomplete information, and static causal laws using answer set programming,
  {Theory and Practice of Logic Programming} 7~(4) (2007) 377--450.

\bibitem{peot1992conditional}
M.~A. Peot, D.~E. Smith, Conditional nonlinear planning, in: Artificial
  Intelligence Planning Systems, Elsevier, 1992, pp. 189--197.

\bibitem{GoldenW96}
K.~Golden, D.~S. Weld, Representing sensing actions: The middle ground
  revisited, in: {KR}, Morgan Kaufmann, 1996, pp. 174--185.

\bibitem{PryorC96}
L.~Pryor, G.~Collins, Planning for contingencies: {A} decision-based approach,
  J. Artif. Intell. Res. 4 (1996) 287--339.

\bibitem{LoboMT97}
J.~Lobo, G.~Mendez, S.~R. Taylor, Adding knowledge to the action description
  language {A}, in: B.~Kuipers, B.~L. Webber (Eds.), Proceedings of the
  Fourteenth National Conference on Artificial Intelligence and Ninth
  Innovative Applications of Artificial Intelligence Conference, {AAAI} 97,
  {IAAI} 97, July 27-31, 1997, Providence, Rhode Island, {USA.}, {AAAI} Press /
  The {MIT} Press, 1997, pp. 454--459.

\bibitem{blum1997fast}
A.~L. Blum, M.~L. Furst, Fast planning through planning graph analysis,
  Artificial intelligence 90~(1-2) (1997) 281--300.

\bibitem{smith1998conformant}
D.~E. Smith, D.~S. Weld, Conformant graphplan, in: AAAI/IAAI, 1998, pp.
  889--896.

\bibitem{Golden98}
K.~Golden, Leap before you look: Information gathering in the {PUCCINI}
  planner, in: {AIPS}, {AAAI}, 1998, pp. 70--77.

\bibitem{weld1998extending}
D.~S. Weld, C.~R. Anderson, D.~E. Smith, Extending graphplan to handle
  uncertainty \& sensing actions, in: Aaai/iaai, 1998, pp. 897--904.

\bibitem{rintanen1999constructing}
J.~Rintanen, Constructing conditional plans by a theorem-prover, Journal of
  Artificial Intelligence Research 10 (1999) 323--352.

\bibitem{bonet2000planning}
B.~Bonet, H.~Geffner, Planning with incomplete information as heuristic search
  in belief space, in: Proceedings of the Fifth International Conference on
  Artificial Intelligence Planning Systems, AAAI Press, 2000, pp. 52--61.

\bibitem{castellini2003sat}
C.~Castellini, E.~Giunchiglia, A.~Tacchella, Sat-based planning in complex
  domains: Concurrency, constraints and nondeterminism, Artificial Intelligence
  147~(1-2) (2003) 85--117.

\bibitem{eiter2003logic}
T.~Eiter, W.~Faber, N.~Leone, G.~Pfeifer, A.~Polleres, A logic programming
  approach to knowledge-state planning, ii: The dlvk system, Artificial
  Intelligence 144~(1-2) (2003) 157--211.

\bibitem{cimatti2004conformant}
A.~Cimatti, M.~Roveri, P.~Bertoli, Conformant planning via symbolic model
  checking and heuristic search, Artificial Intelligence 159~(1-2) (2004)
  127--206.

\bibitem{bryce2006planning}
D.~Bryce, S.~Kambhampati, D.~E. Smith, Planning graph heuristics for belief
  space search, Journal of Artificial Intelligence Research 26 (2006) 35--99.

\bibitem{hoffmann2006conformant}
J.~Hoffmann, R.~I. Brafman, Conformant planning via heuristic forward search: A
  new approach, Artificial Intelligence 170~(6-7) (2006) 507--541.

\bibitem{lowe2010planning}
B.~L{\"o}we, E.~Pacuit, A.~Witzel, Planning based on dynamic epistemic logic,
  Tech. rep., Citeseer (2010).

\bibitem{BolanderA11}
T.~Bolander, M.~B. Andersen, Epistemic planning for single and multi-agent
  systems, Journal of Applied Non-Classical Logics 21~(1) (2011) 9--34.

\bibitem{pontelli2012answer}
E.~Pontelli, T.~C. Son, C.~Baral, G.~Gelfond, Answer set programming and
  planning with knowledge and world-altering actions in multiple agent domains,
  in: Correct Reasoning, Springer, 2012, pp. 509--526.

\bibitem{AndersenBJ12}
M.~B. Andersen, T.~Bolander, M.~H. Jensen, Conditional epistemic planning, in:
  {JELIA}, Vol. 7519 of Lecture Notes in Computer Science, Springer, 2012, pp.
  94--106.

\bibitem{Bolander14}
T.~Bolander, Seeing is believing: Formalising false-belief tasks in dynamic
  epistemic logic, in: {ECSI}, Vol. 1283 of {CEUR} Workshop Proceedings,
  CEUR-WS.org, 2014, pp. 87--107.

\bibitem{KominisG15}
F.~Kominis, H.~Geffner, Beliefs in multiagent planning: From one agent to many,
  in: R.~I. Brafman, C.~Domshlak, P.~Haslum, S.~Zilberstein (Eds.), Proceedings
  of the Twenty-Fifth International Conference on Automated Planning and
  Scheduling, {ICAPS} 2015, Jerusalem, Israel, June 7-11, 2015., {AAAI} Press,
  2015, pp. 147--155.

\bibitem{KominisG17}
F.~Kominis, H.~Geffner, Multiagent online planning with nested beliefs and
  dialogue, in: L.~Barbulescu, J.~Frank, Mausam, S.~F. Smith (Eds.),
  Proceedings of the Twenty-Seventh International Conference on Automated
  Planning and Scheduling, {ICAPS} 2017, Pittsburgh, Pennsylvania, USA, June
  18-23, 2017., {AAAI} Press, 2017, pp. 186--194.

\bibitem{CooperHMMR16a}
M.~C. Cooper, A.~Herzig, F.~Maffre, F.~Maris, P.~R{\'{e}}gnier, Simple
  epistemic planning: Generalised gossiping, in: G.~A. Kaminka, M.~Fox,
  P.~Bouquet, E.~H{\"{u}}llermeier, V.~Dignum, F.~Dignum, F.~van Harmelen
  (Eds.), {ECAI} 2016 - 22nd European Conference on Artificial Intelligence, 29
  August-2 September 2016, The Hague, The Netherlands, Vol. 285 of Frontiers in
  Artificial Intelligence and Applications, {IOS} Press, 2016, pp. 1563--1564.

\bibitem{CooperHMMR16b}
M.~C. Cooper, A.~Herzig, F.~Maffre, F.~Maris, P.~R{\'{e}}gnier, A simple
  account of multi-agent epistemic planning, in: G.~A. Kaminka, M.~Fox,
  P.~Bouquet, E.~H{\"{u}}llermeier, V.~Dignum, F.~Dignum, F.~van Harmelen
  (Eds.), {ECAI} 2016 - 22nd European Conference on Artificial Intelligence, 29
  August-2 September 2016, The Hague, The Netherlands, Vol. 285 of Frontiers in
  Artificial Intelligence and Applications, {IOS} Press, 2016, pp. 193--201.

\bibitem{BolanderEMN18}
T.~Bolander, T.~Engesser, R.~Mattm{\"{u}}ller, B.~Nebel, Better eager than
  lazy? how agent types impact the successfulness of implicit coordination, in:
  M.~Thielscher, F.~Toni, F.~Wolter (Eds.), Principles of Knowledge
  Representation and Reasoning: Proceedings of the Sixteenth International
  Conference, {KR} 2018, Tempe, Arizona, 30 October - 2 November 2018., {AAAI}
  Press, 2018, pp. 445--453.

\bibitem{moore1984formal}
R.~C. Moore, A formal theory of knowledge and action, Tech. rep., SRI
  International, Menlo Park, CA, Artificial Intelligence Center (1984).

\bibitem{thielscher2000representing}
M.~Thielscher, Representing the knowledge of a robot, in: KR, 2000, pp.
  109--120.

\bibitem{son2001formalizing}
T.~C. Son, C.~Baral, Formalizing sensing actions—a transition function based
  approach, Artificial Intelligence 125~(1-2) (2001) 19--91.

\bibitem{scherl2003knowledge}
R.~B. Scherl, H.~J. Levesque, Knowledge, action, and the frame problem,
  Artificial Intelligence 144~(1-2) (2003) 1--39.

\bibitem{van2007dynamic}
H.~Van~Ditmarsch, W.~van Der~Hoek, B.~Kooi, Dynamic epistemic logic, Vol. 337,
  Springer Science \& Business Media, 2007.

\bibitem{van2008semantic}
H.~van Ditmarsch, B.~Kooi, Semantic results for ontic and epistemic change,
  Logic and the foundations of game and decision theory (LOFT 7) 3 (2008)
  87--117.

\bibitem{pratt76a}
V.~Pratt, Semantical consideration on floyd-hoare logic, in: Proceedings of the
  Seventeenth Annual Symposium on Foundations of Computer Science (SFCS'76),
  {IEEE} Computer Society Press, 1976, pp. 109--121.

\bibitem{van2005dynamic}
H.~P. van Ditmarsch, W.~van~der Hoek, B.~P. Kooi, Dynamic epistemic logic with
  assignment, in: Proceedings of the fourth international joint conference on
  Autonomous agents and multiagent systems, ACM, 2005, pp. 141--148.

\bibitem{BatlagMS98}
A.~Baltag, L.~S. Moss, S.~Solecki, The logic of public announcements and common
  knowledge and private suspicions, in: {TARK}, Morgan Kaufmann, 1998, pp.
  43--56.

\bibitem{baltag2004logics}
A.~Baltag, L.~S. Moss, Logics for epistemic programs, Synthese 139~(2) (2004)
  165--224.

\bibitem{STRIPS}
R.~E. Fikes, N.~J. Nilsson, {STRIPS}: A new approach to the application of
  theorem proving to problem solving, Artificial Intelligence 2~(3) (1971) 189
  -- 208.

\bibitem{MT99}
V.~W. Marek, M.~Truszczy{\'n}ki, Stable models and an alternative logic
  programming paradigm, in: K.~R. Apt, V.~W. Marek, M.~Truszczy{\'n}ski,
  D.~Warren (Eds.), The Logic Programming Paradigm, Artificial Intelligence,
  Springer Berlin Heidelberg, 1999, pp. 375--398.

\bibitem{Nie99}
I.~Niemel{\"a}, Logic programs with stable model semantics as a constraint
  programming paradigm, Annals of Mathematics and Artificial Intelligence 25
  (1999) 241--273.

\bibitem{baral2010}
C.~Baral, Knowledge Representation, Reasoning and Declarative Problem Solving,
  Cambridge University Press, 2010.

\bibitem{GL88}
M.~Gelfond, V.~Lifschitz, The stable model semantics for logic programming, in:
  Proc. of the 5th Intl. Conference on Logic Programming (ICLP'88), 1988, pp.
  1070--1080.

\bibitem{GelfondL91}
M.~Gelfond, V.~Lifschitz, Classical negation in logic programs and disjunctive
  databases, New Generation Comput. 9~(3/4) (1991) 365--386.

\bibitem{lifschitz99b}
V.~Lifschitz, Answer set planning, in: D.~{de Schreye} (Ed.), Proceedings of
  the International Conference on Logic Programming (ICLP'99), MIT Press, 1999,
  pp. 23--37.

\bibitem{LIFSCHITZ200239}
V.~Lifschitz, Answer set programming and plan generation, Artificial
  Intelligence 138~(1) (2002) 39 -- 54, knowledge Representation and Logic
  Programming.

\bibitem{lee13}
J.~Lee, V.~Lifschitz, F.~Yang, Action language {${\cal BC}$} preliminary
  report, in: F.~Rossi (Ed.), {IJCAI} 2013, Proceedings of the 23rd
  International Joint Conference on Artificial Intelligence, Beijing, China,
  August 3-9, 2013, {IJCAI/AAAI}, 2013, pp. 983--989.

\bibitem{CabalarKSS18}
P.~Cabalar, R.~Kaminski, T.~Schaub, A.~Schuhmann, Temporal answer set
  programming on finite traces, {Theory and Practice of Logic Programming}
  18~(3-4) (2018) 406--420.

\bibitem{ergele16a}
E.~Erdem, M.~Gelfond, N.~Leone, Applications of {ASP}, {AI} Magazine 37~(3)
  (2016) 53--68.

\bibitem{Gelfond91}
M.~Gelfond, Strong introspection, in: T.~L. Dean, K.~McKeown (Eds.),
  Proceedings of the {AAAI} Conference, Vol.~1, {AAAI} Press/The {MIT} Press,
  1991, pp. 386--391.

\bibitem{LK18}
A.~P. Leclerc, P.~T. Kahl, \href{http://arxiv.org/abs/1809.07141}{A survey of
  advances in epistemic logic program solvers}, CoRR abs/1809.07141.
\newblock \href {http://arxiv.org/abs/1809.07141} {\path{arXiv:1809.07141}}.
\newline\urlprefix\url{http://arxiv.org/abs/1809.07141}

\bibitem{Cabalar2019faeel}
P.~Cabalar, J.~Fandinno, L.~{Fari{\~{n}}as del Cerro},
  \href{https://arxiv.org/abs/1902.07741}{Founded world views with
  autoepistemic equilibrium logic}, Under consideration for 15th International
  Conference on Logic Programming and Nonmonotonic Reasoning (LPNMR 2019), Corr
  abs/1902.07741.
\newline\urlprefix\url{https://arxiv.org/abs/1902.07741}

\bibitem{Pearce96}
D.~Pearce, A new logical characterisation of stable models and answer sets, in:
  J.~Dix, L.~M. Pereira, T.~C. Przymusinski (Eds.), Non-Monotonic Extensions of
  Logic Programming, {NMELP} 1996, Bad Honnef, Germany, September 5-6, 1996,
  Selected Papers, Vol. 1216 of Lecture Notes in Computer Science, Springer,
  1996, pp. 57--70.

\bibitem{Moore85}
R.~C. Moore, Semantical considerations on nonmonotonic logic, Artificial
  Intelligence 25~(1) (1985) 75--94.

\bibitem{Aucher10}
G.~Aucher, An internal version of epistemic logic, Studia Logica 94~(1) (2010)
  1--22.

\bibitem{Fagin95}
R.~Fagin, Reasoning about knowledge, {MIT} Press, 1995.

\bibitem{halpern1986reasoning}
J.~Y. Halpern, Reasoning about knowledge: an overview, in: Theoretical aspects
  of reasoning about knowledge, Elsevier, 1986, pp. 1--17.

\bibitem{cakascsc18a}
P.~Cabalar, R.~Kaminski, T.~Schaub, A.~Schuhmann, Temporal answer set
  programming on finite traces, Theory and Practice of Logic Programming
  18~(3-4) (2018) 406--420.

\bibitem{KAu86}
H.~Kautz, The logic of persistence, in: Proceedings of the 5th National
  Conference of Artificial Intelligence, 1986, pp. 401--405.

\bibitem{nelson1949}
D.~Nelson, Constructible falsity, J. Symbolic Logic 14~(1) (1949) 16--26.

\bibitem{LifschitzTT99}
V.~Lifschitz, L.~R. Tang, H.~Turner, Nested expressions in logic programs, Ann.
  Math. Artif. Intell. 25~(3-4) (1999) 369--389.

\bibitem{heyting30a}
A.~Heyting, Die formalen {R}egeln der intuitionistischen {L}ogik, in:
  Sitzungsberichte der Preussischen Akademie der Wissenschaften, Deutsche
  Akademie der Wissenschaften zu Berlin, 1930, p. 42–56, reprint in
  Logik-Texte: Kommentierte Auswahl zur Geschichte der Modernen Logik,
  Akademie-Verlag, 1986.

\bibitem{Bolander17}
T.~Bolander, A gentle introduction to epistemic planning: The {DEL} approach,
  in: M4M@ICLA, Vol. 243 of {EPTCS}, 2017, pp. 1--22.

\bibitem{lifschitz2002}
V.~Lifschitz, Answer set programming and plan generation, Artificial
  Intelligence 138~(1-2) (2002) 39--54.

\bibitem{baral2013}
C.~Baral, G.~Gelfond, E.~Pontelli, T.~C. Son, Reasoning about the beliefs of
  agents in multi-agent domains in the presence of state constraints: The
  action language mal, in: J.~Leite, T.~C. Son, P.~Torroni, L.~van~der Torre,
  S.~Woltran (Eds.), Computational Logic in Multi-Agent Systems, Springer
  Berlin Heidelberg, Berlin, Heidelberg, 2013, pp. 290--306.

\end{thebibliography}

\end{document}